\DeclareMathOperator*{\argmax}{arg\,max}
\newtheorem{theorem}{Theorem}[section]
\newtheorem{lemma}[theorem]{Lemma}
\ifcvprfinal\pagestyle{empty}\fi
\begin{document}

%%%%%%%%% TITLE
\title{End-to-end Training of CNN-CRF via Differentiable Dual-Decomposition}

\author[1]{Shaofei Wang}
\author[2]{Vishnu Lokhande}
\author[3]{Maneesh Singh}
\author[1]{Konrad Kording}
\author[3]{Julian Yarkony}
\affil[1]{University of Pennsylvania}
\affil[2]{University of Wisconsin-Madison}
\affil[3]{Verisk Computational and Human Intelligence Laboratory}

% \author{First Author\\
% Institution1\\
% Institution1 address\\
% {\tt\small firstauthor@i1.org}
% % For a paper whose authors are all at the same institution,
% % omit the following lines up until the closing ``}''.
% % Additional authors and addresses can be added with ``\and'',
% % just like the second author.
% % To save space, use either the email address or home page, not both
% \and
% Second Author\\
% Institution2\\
% First line of institution2 address\\
% {\tt\small secondauthor@i2.org}
% }

\maketitle
%\thispagestyle{empty}

%%%%%%%%% ABSTRACT
\begin{abstract}
Modern computer vision (CV) is often based on convolutional neural networks (CNNs) that excel at hierarchical feature extraction. The previous generation of CV approaches was often based on conditional random fields (CRFs) that excel at modeling flexible higher order interactions.  As their benefits are complementary they are often combined.  However, these approaches generally use mean-field approximations and thus, arguably, did not directly optimize the real problem. Here we revisit dual-decomposition-based approaches to CRF optimization, an alternative to the mean-field approximation. These algorithms can efficiently and exactly solve sub-problems and directly optimize a convex upper bound of the real problem, providing optimality certificates on the way.  Our approach uses a novel fixed-point iteration algorithm which enjoys dual-monotonicity, dual-differentiability and high parallelism.  The whole system, CRF and CNN can thus be efficiently trained using back-propagation. 
% We also show that this system can naturally conduct semi-supervised learning when ground-truth labels are sparse or unavailable by enforcing the so-called weak-tree-agreement (WTA) as an unsupervised loss function.
We demonstrate the effectiveness of our system on semantic image segmentation, showing consistent improvement over baseline models.
\end{abstract}

\section{Introduction}
\label{sec:intro}
The end-to-end training of systems that combine CNNs and CRFs is a popular research direction.  This combination often improves quality in pixel-labeling tasks in comparison with decoupled training~\cite{CRFasRNN,Knobelreiter_2017_CVPR}.  Most of the existing end-to-end trainable systems are based on a mean-field (MF) approximation to the CRF~\cite{CRFasRNN, DPN, piecewiseCRF, Song_2019_ICCV}.  The MF approximation approximates the posterior distribution of a CRF via a set of variational distributions, which are of simple forms and amenable to analytical solution.  This approximation is popular in end-to-end trainable frameworks where the CRF is combined with a CNN because MF iterations can be unrolled as a set of recurrent convolutional and arithmetic layers (\cf\ \cite{CRFasRNN, piecewiseCRF, DPN}) and thus is fully-differentiable.  Despite the computational efficiency and easy implementation, MF based approaches suffer from the somewhat bold assumptions that the underlying latent variables are independent and the variational distributions are simple.  The exact maximum-a-posteriori (MAP) solution of a CRF can thus never be attained with MF iterations, since in practical CRFs latent variables usually are not independent and the posterior distributions are complex and can not be analytically expressed.  For example, in order to employ efficient inference,~\cite{CRFasRNN, densecrf} model the pairwise potentials as the weighted sum of a set of Gaussian kernels over pairs of feature vectors, this will always penalize or give very small boost to dissimilar feature vectors, thus it tends to smooth-out pixel-labels spatially.  On the other hand, a more general pairwise model should be able to encourage assignment of different labels to a pair of dissimilar feature vectors, if they actually belong to different semantic classes in ground-truth.

Here we explore an alternative, and historically popular solution to MAP inference of Markov random field (MRF), the dual-decomposition~\cite{komodakis2007mrf,sontag2011introduction}.  Note that CRFs are simply MRFs conditioned on input data, thus inference methods on MRFs are applicable to CRFs if input data and potential function are fixed per inference.  Dual-decomposition does not make any assumptions about the distribution of CRF but instead formulate the MAP inference problem as an energy maximization \footnote{Forming MAP inference as energy minimization problem is also common in the literature, here we choose maximization in order to stay consistent with typical classification losses, \eg\ Cross-Entropy with Softmax} problem.  Directly solving such problems on graph with cycles is typically NP-hard~\cite{MAP_NPhard}, however the dual-decomposition approach relaxes the original problem by decomposing the graph into a set of trees that cover each edge and vertex at least once; MAP inference on these tree-structured sub-problems can be done efficiently via dynamic programming, while the solution to the original problem can be attained via either dual-coordinate descent or sub-gradient descent using solutions from the sub-problems.  Dual-decomposition is still an approximate algorithm in that it minimizes a convex upper-bound of the original problem and the solutions of the sub-problems do not necessarily agree with each other (even if they do agree, it could still be a fixed point).  However the approximate primal objective can be attained via heuristic decoding anytime during optimization and the duality-gap describes the quality of the current solution, and if the gap is zero then we are guaranteed to have found an optimal solution.  In comparison, the MF approximation only guarantees a local minimum of the Kullback Leibler (KL) divergence.

When it comes to learning parameters for CNNs and CRFs, the MF approximation is fully differentiable and thus trainable with back-propagation.  Popular dual-decomposition approaches, however, rely on either sub-gradient descent or dual-coordinate-descent to maximize the energy objectives and thus are not immediately differentiable with respect to CNN parameters that generate the CRF potentials.  Max-margin learning is typically used in such a situation for linear models~\cite{Tsochantaridis2005SSVM, komodakis2011m3, Finley2008SSVM} and non-linear, deep-neural-network models~\cite{Chen2015Structured, Knobelreiter_2017_CVPR}. However, they require the MAP-inference routine to be robust enough to find reasonable margin-violators to enable learning. This is especially problematic when the underlying CRF potentials are parameterized by non-linear, complex CNNs instead of linear classifiers as in traditional max-margin frameworks (\eg\ structured-SVM), as argued by~\cite{belanger2017SPEN}.  In contrast, our proposed fixed-point iteration, which is derived from a special-case of more general block coordinate descent algorithms~\cite{sontag2009tbcd, globerson2008mplp}, optimizes the CRF energy directly with respect to CRF potentials and thus can be jointly trained with CNNs via back-propagation; our work shares similar intuition as~\cite{belanger2017SPEN} that max-margin learning can be unstable when combined with deep learning, but instead of employing a gradient-predictor for MAP-inference which has little theoretical guarantees, we derive a differentiable dual-decomposition algorithm for MAP-inference that has desirable theoretical properties such as dual-monotonicity and provable optimality.

The major contribution of our work is thus threefold:
\begin{enumerate}
    \item We revisit dual-decomposition and derive a novel node-based fixed-point iteration algorithm that enjoys dual-monotonicity and dual-sub-differentiability, and provide an efficient, highly-parallel GPU implementation for this algorithm.
    \item We introduce the smoothed-max operator~\cite{DDP} to make our fixed-point iteration algorithm fully-differentiable while remaining monotone for the dual objective, and discuss the benefits of using it instead of typical $max$ operator in practice.
    \item We demonstrate how to conduct end-to-end training of CNNs and CRFs by differentiating through the dual-decomposition layers, and show improvements over pure CNN models on a semantic segmentation task on the PASCAL-VOC 2012 dataset\cite{pascal-voc-2012}.
\end{enumerate}

% \section{Related Work}
% \label{sec:related_work}
% In this section we review previous works that concern with end-to-end training of CNN-CRF models.
% 
% \subsection{Mean-field approximations}
% 
% \subsection{Max-Margin Learning}

\section{Joint Modeling of CRFs and CNNs}
\label{sec:CRFs}
Given an image $\mathcal{I}$ with size $M \times N$, we want to conduct per-pixel labeling on the image, producing a label $l \in \mathcal{L}$ for each pixel in the image (\eg\ semantic classes or disparity).  A CRF aims to model the conditional distribution $P(\bm{L} | \mathcal{I})$ with $\bm{L} \in \mathcal{L}^{|M \times N|}$.  When modeled jointly with a CNN $f$ parameterized by $\bm{\theta}$, the conditional distribution is often written as:

\begin{align}
\label{eqn:CRF_posterior}
P(\bm{L} | \mathcal{I}) = \frac{1}{Z} \exp( f_{\bm{\theta}}(\bm{L} ; \mathcal{I}) )
\end{align}

where $Z = \sum_{\bm{L} \in \mathcal{L}^{|M \times N|}} \exp( f_{\bm{\theta}}(\bm{L} ; \mathcal{I}) )$ is a normalizing constant that does not depend on $\bm{L}$.  Following common practices, in this paper we only consider pairwise potential functions:

\begin{align}
\label{eqn:CRF_potential}
f_{\bm{\theta}}(\bm{L};\mathcal{I}) = \sum_{i \in V} \psi_{\bm{\theta}}(l_i; \mathcal{I}) + \sum_{ij \in E} \phi_{\bm{\theta}}(l_i, l_j; \mathcal{I})
\end{align}

where $\psi(\cdot)$ and $\phi(\cdot, \cdot)$ are neural networks modeling unary and pairwise potential functions. $V$ denotes the set of all pixel locations and $E$ denotes the pairwise edges in the graph.  Finding the mode (or maximum of the modes in multi-modal case) of the posterior distribution Eq.~\eqref{eqn:CRF_posterior} is equivalent to finding the maximizing configuration $\bm{L}^*$ of $\bm{L}$ to Eq.~\eqref{eqn:CRF_potential}, that is:

\begin{align}
\label{eqn:CRF_argmax}
\bm{L}^* &= \argmax_{\bm{L}} f_{\bm{\theta}} (\bm{L};\mathcal{I}) \\
P(\bm{L}^* | \mathcal{I}) &= \frac{1}{Z} \exp( \max_{\bm{L}} f_{\bm{\theta}} (\bm{L} ; \mathcal{I}) )
\end{align}

When conducting test time optimization, we aim to maximize the objective function $f_{\bm{\theta}} (\bm{L} ; \mathcal{I})$ w.r.t. $\bm{L}$, thus the optimization problem can be written as:

\begin{align}
\label{eqn:MAP_inference}
\max_{\bm{L}} f_{\bm{\theta}}(\bm{L};\mathcal{I})
\end{align}

Eq.~\eqref{eqn:MAP_inference} is the MAP inference problem.  We show how to solve this problem via dual-decomposition in the next section.

\section{Dual-Decomposition for MAP Inference}
\label{sec:LP-MAP}
We formally define a graph $G = (V,E)$ with $M \times N$ vertices representing a 2D-grid.  Each vertex can choose one of the states in the label set $\mathcal{L} = \{1, 2, \dots , L \}$. We define a labeling of the grid as $\bm{L} \in \mathcal{L}^{|M \times N|}$ and the state of vertex at location $i$ as $l_i$.  For simplicity, we will drop the dependency on $\bm{\theta}$ and $\mathcal{I}$ for all potential functions $f$, $\phi$ and $\psi$ during the derivation of inference algorithm, since they are fixed per inference.  This MAP inference problem on the MRF is defined as:

\begin{align}
\label{eqn:MRF-MAP_inference}
\max_{\bm{L}} \sum_{i \in V} \psi(l_i) + \sum_{ij \in E} \phi(l_i, l_j)
\end{align}

\subsection{Integer Linear Programming Formulation to MAP Problem}
To derive the dual-decomposition we first transform Eq.~\eqref{eqn:MRF-MAP_inference} into an integer linear programming (ILP) problem.  Let us denote $\bm{x}_i(l)$ as the distribution corresponding to vertex/location $i$ and $\bm{x}_{ij}(l', l)$ as the joint distribution corresponding to a pair of vertices/locations $i, j$.  The constraint set $\mathcal{X}^{\mathcal{G}}$ is defined to enforce the pairwise and unary distribution to be consistent and discrete:

\begin{equation}
% \label{eqn:distribution_constraints}
\mathcal{X}^{\mathcal{G}} = 
  \left\{ 
    \bm{x} \; \left| \;
    \begin{aligned}
      &\sum\nolimits_{l} \bm{x}_i(l) = 1, \quad \forall i \in V\\
      &\sum\nolimits_{l'} \bm{x}_{ij}(l, l') = \bm{x}_i(l), \quad \forall (ij, l) \in E \times \mathcal{L}\\
      &\bm{x}_i(l) \in \{0, 1\}, \bm{x}_{ij}(l, l') \in \{0, 1\}
    \end{aligned} \nonumber \right.
  \right\}
\end{equation}

Now we re-write Eq.~\eqref{eqn:MRF-MAP_inference} as an ILP as:

\begin{align}
\label{eqn:MAP_ILP}
\max_{\bm{x}} \quad & \sum_{i \in V} \bm{\psi}_i \cdot \bm{x}_i + \sum_{ij \in E} \bm{\phi}_{ij} \cdot \bm{x}_{ij} \\
\text{s.t.} \quad & \bm{x} \in \mathcal{X}^{\mathcal{G}} \nonumber
\end{align}

where $\bm{x}_i$ and $\bm{\psi}_i$ are $|\mathcal{L}|$ dimensional vectors representing vertex distribution and scores at vertex $i$, while $\bm{x}_{ij}$ and $\bm{\phi}_{ij}$ are $|\mathcal{L}|^2$ dimensional vectors representing edge distribution and scores for a pair of vertex $i,j$.

\subsection{Dual-Decomposition for Integer Linear Programming}

\begin{figure*}[t]
    \centering
    \includegraphics[width=0.95\textwidth]{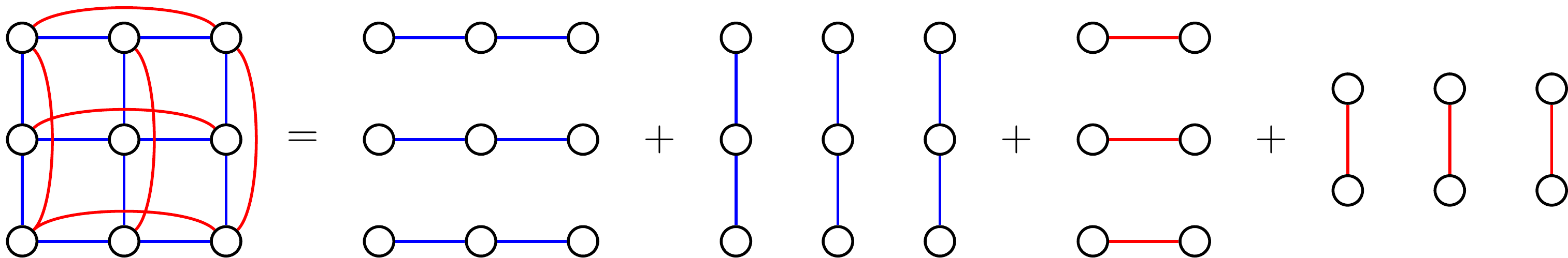} \\
    \caption{Illustration for decomposition of a grid-graph with long-range interactions into sets of horizontal and vertical chains.  We use stride 1 and stride 2 edges, as illustrated in this figure, for all of our models, but our derivation in Sec.~\ref{sec:monotone_updates} holds for general graphs with arbitrary connections.}
    \label{fig:graph_decomp_illust}
\end{figure*}

Given an arbitrary graph with cycles, solving Eq.~\eqref{eqn:MAP_ILP} is  generally NP-hard.  Dual-decomposition~\cite{komodakis2007mrf,sontag2011introduction} tackles Eq.~\eqref{eqn:MAP_ILP} by first decomposing the graph $G$ into a set of tree-structured, easy-to-solve sub-problems, such that each edge and vertex in $G = (V, E)$ is covered at least once by these sub-problems. A set of additional constraints is then added to enforce that maximizing configurations of each sub-problem agree with one another. These constraints are then relaxed by Lagrangian relaxation and the relaxed objective can be optimized by either sub-gradient ascent~\cite{komodakis2007mrf,komodakis2011m3} or fixed-point updates~\cite{sontag2009tbcd,yarkony2010covering,globerson2008mplp,TRW}.

Formally, we define a set of trees $\mathcal{T}$ that cover each $i \in V$ and $ij \in E$ at least once.  We denote the set of variables corresponding to vertices and edges in tree $t \in \mathcal{T}$ and all sets of such variables as $\bm{x}^t$ and $\{ \bm{x}^t \}$, respectively.  We use $\mathcal{T}(i)$ and $\mathcal{T}(ij)$ to denote the set of trees that cover vertex $i \in V$ and edge $ij \in E$, respectively.  With the above definitions, Eq.~\eqref{eqn:MAP_ILP} can be rewritten as: 

\begin{align}
\label{eqn:MAP_dual_decomposition}
\max_{\{\bm{x}^t\}, \bm{x}} \quad & \sum_{t \in \mathcal{T}} \left( \sum_{i \in V} \bm{x}_i^t \cdot \bm{\psi}_i^t + \sum_{ij \in E} \bm{x}_{ij}^t \cdot \bm{\phi}_{ij}^t \right) \\
\text{s.t.} \quad & \bm{x}^t \in \mathcal{X}^{\mathcal{G}}, \quad \forall t \in \mathcal{T} \nonumber \\
& \bm{x}^t_i = \bm{x}_i, \quad \forall i \in V, t \in \mathcal{T}(i)  \nonumber \\
& \bm{x}^t_{ij} = \bm{x}_{ij}, \quad \forall ij \in E, t \in \mathcal{T}(ij)  \nonumber
\end{align}

For our application, we decompose the graph with vertical and horizontal connections of arbitrary length into sets of horizontal and vertical chain sub-problems (see Fig.~\ref{fig:graph_decomp_illust} for example).  In such a case, each node is covered multiple times while each edge is covered exactly once, we replicate their scores by the number of times they are covered by sub-problems. The replicated scores $\{ \bm{\psi}^t, \bm{\phi}^t \}$ should satisfy:

\begin{align}
\label{eqn:dual_decomposition_scores}
\sum_{t \in T(i)} \bm{\psi}^t_i = \bm{\psi}_i, \quad, \bm{\phi}^t_{ij} = \bm{\phi}_{ij}
\end{align}

We then have no replicated edge anymore, and can re-write Eq.~\eqref{eqn:MAP_dual_decomposition} as:

\begin{align}
\label{eqn:MAP_dual_decomposition_chains}
\max_{\{\bm{x}^t\}, \bm{x}} \quad & \sum_{t \in \mathcal{T}} \left( \sum_{i \in V} \bm{x}_i^t \cdot \bm{\psi}_i^t + \sum_{ij \in E} \bm{x}_{ij}^t \cdot \bm{\phi}_{ij} \right) \\
\text{s.t.} \quad & \bm{x}^t \in \mathcal{X}^{\mathcal{G}}, \quad \forall t \in \mathcal{T} \nonumber \\
& \bm{x}^t_i = \bm{x}_i, \quad \forall i \in V, t \in \mathcal{T}(i)  \nonumber
\end{align}

Applying Lagrangian multipliers to relax the second set of constraints (\ie\ agreement constraints among sub-problems), we have:

\begin{align}
\begin{split}
\min_{\{ \bm{\lambda}^t \}} \max_{\{\bm{x}^t\}, \bm{x}} \quad & \sum_{t \in \mathcal{T}} \left( \sum_{i \in V} \bm{x}_i^t \cdot (\bm{\psi}_i^t + \bm{\lambda}_i^t) + \right. \\
& \quad \quad \left. \sum_{ij \in E} \bm{x}_{ij}^t \cdot \bm{\phi}_{ij} - \sum_{i \in V} \bm{x}_i \cdot \bm{\lambda}_i^t \right) \label{eqn:MAP_dual_decomposition_dual} \\
\text{s.t.} \quad& \bm{x}^t \in \mathcal{X}^{\mathcal{G}}, \quad \forall t \in \mathcal{T}
\end{split}
\end{align}

Where $\bm{\lambda}^t$ and $\{ \bm{\lambda}^t \}$ denote dual variables for sub-problem $t$ and the set of all sub-problems, respectively.  $\bm{\lambda}^t_i$ denotes dual variables for sub-problem $t$ at location $i$ and has dimension of $L$ (\ie\ number of labels/states).  It is easy to check that if $\sum_{t \in \mathcal{T}(i)} \bm{\lambda}_i^t \neq \bm{0}$ for any $i \in V$, then $\bm{\lambda}_i^t = +\infty$.  Thus we must enforce $\sum_{t \in \mathcal{T}(i)} \bm{\lambda}_i^t = 0, \forall i \in V$,  it follows that $\sum_{i \in V} \sum_{t \in \mathcal{T}} \bm{x}_i \cdot \bm{\lambda}_i^t = 0$, and we can therefore eliminate $\{ \bm{x} \}$ from Eq.~\eqref{eqn:MAP_dual_decomposition_dual}, resulting in:

\begin{align}
\label{eqn:MAP_dual_decomposition_final}
\min_{\{ \bm{\lambda}^t \}} \max_{\{\bm{x}^t\}} \quad & \sum_{t \in \mathcal{T}} \left( \sum_{i \in V} \bm{x}_i^t \cdot (\bm{\psi}_i^t + \bm{\lambda}_i^t) + \sum_{ij \in E} \bm{x}_{ij}^t \cdot \bm{\phi}_{ij} \right) \\
\text{s.t.} \quad & \bm{x}^t \in \mathcal{X}^{\mathcal{G}}, \quad \forall t \in \mathcal{T} \nonumber \\
& \sum_{t \in \mathcal{T}(i)} \bm{\lambda}^t_i = 0, \quad \forall i \in V \nonumber
\end{align}

\subsection{Monotone Fixed-Point Algorithm for Dual-Decomposition}
\label{sec:monotone_updates}
In this section we derive a block coordinate-descent algorithm that monotonically decreases the objective of Eq.~\eqref{eqn:MAP_dual_decomposition_final}, generalizing~\cite{sontag2009tbcd,yarkony2010covering}.  It is often convenient to initialize $\bm{\lambda}$'s as 0 and fold them into $\{ \bm{\psi}^t \}$ terms such that we optimize an equivalent objective to Eq.~\eqref{eqn:MAP_dual_decomposition_final} over $\{ \bm{\psi}^t \}$:

\begin{align}
\label{eqn:MAP_dual_decomposition_reparam}
\min_{\{ \bm{\psi}^t \}} \max_{\{\bm{x}^t\}} \quad & \sum_{t \in \mathcal{T}} \left( \sum_{i \in V} \bm{x}_i^t \cdot \bm{\psi}_i^t + \sum_{ij \in E} \bm{x}_{ij}^t \cdot \bm{\phi}_{ij} \right) \\
\text{s.t.} \quad & \bm{x}^t \in \mathcal{X}^{\mathcal{G}}, \quad \forall t \in \mathcal{T} \nonumber \\
& \sum_{t \in \mathcal{T}(i)} \bm{\psi}^t_i = \bm{\psi}_i, \quad \forall i \in V \nonumber
\end{align}

Now consider fixing the dual variables for all sub-problems at all locations except for those at one location $k$ and optimizing only with respect to the vector $\bm{\psi}^t_{k}, \forall t \in \mathcal{T}(k)$ and primal variables $\{ \bm{x}^t \}$.
%Note that since each sub-problem is chain-structured, $\bm{x}_{ij}^t$ is equivalent to $\bm{x}_i^t \cdot \bm{x}_j^t$ when computing max-marginals on sub-problems.
Define $\bm{\mu}_k^t(l) = \bm{\psi}_k^t(l) + \max_{\substack{\bm{x}^t \in \mathcal{X}^{\mathcal{G}} \\ \bm{x}_k^t(l) = 1}} \sum_{i \in V \setminus k} \bm{x}_i^t \cdot \bm{\psi}_i^t + \sum_{ij \in E} \bm{x}_{ij}^t \cdot \bm{\phi}_{ij}$ as the max-marginal of sub-problem $t$ at location $k$ with $\bm{x}^t_k(l) = 1$, similarly we define the max-marginal vector of sub-problem $t$ at location $k$ as $\bm{\mu}^t_k$ , and the max-energy of sub-problem $t$ as $\bm{\mu}^t$.  Note that $\bm{\mu}^t_k$ is a vector-value while $\bm{\mu}_k^t(l)$ and $\bm{\mu}^t$ are scalar-values.

\begin{lemma}
\label{lemm_1}
For a single location $k \in V$, the following coordinate update to $\bm{\psi}^t_{k}, \forall t \in \mathcal{T}(k)$ is optimal:
\begin{align}
\label{eqn:update_rule_single}
 \bm{\psi}_k^t(l_k) \leftarrow \bm{\psi}_k^t(l_k) - \left(\bm{\mu}_k^t(l_k) - \frac{1}{|\mathcal{T}(k)|} \sum_{\bar{t} \in \mathcal{T}(k)} \bm{\mu}_k^{\bar{t}}(l_k) \right), \nonumber \\
 \forall t \in \mathcal{T} (k), l_k \in \mathcal{L}
\end{align}
\end{lemma}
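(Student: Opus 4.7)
The plan is to reduce the single-site block update to a constrained minimization, exhibit a block-invariant lower bound on the block objective, and verify that the proposed update attains it.

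First I would isolate the block objective. Because the inner maximization over $\{\bm{x}^t\}$ decouples across trees and, for any $t \in \mathcal{T}(k)$, $\max_{\bm{x}^t \in \mathcal{X}^{\mathcal{G}}}[\sum_i \bm{x}_i^t \cdot \bm{\psi}_i^t + \sum_{ij} \bm{x}_{ij}^t \cdot \bm{\phi}_{ij}] = \max_l \bm{\mu}_k^t(l)$ by the very definition of the max-marginal, trees $t \notin \mathcal{T}(k)$ contribute only an additive constant when we fix everything but $\{\bm{\psi}_k^t\}_{t \in \mathcal{T}(k)}$. The block problem therefore reads
\begin{align*}
\min \; \sum_{t \in \mathcal{T}(k)} \max_l \bm{\mu}_k^t(l) \quad \text{s.t.} \quad \sum_{t \in \mathcal{T}(k)} \bm{\psi}_k^t = \bm{\psi}_k.
\end{align*}

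Second, I would exploit the affine structure $\bm{\mu}_k^t(l) = \bm{\psi}_k^t(l) + C_k^t(l)$, where $C_k^t(l)$ is the maximum over $\bm{x}^t$ with $\bm{x}_k^t(l)=1$ of terms that involve only dual variables at sites other than $k$, which are frozen. Summing over $t \in \mathcal{T}(k)$ and using the reparameterization constraint,
\begin{align*}
\sum_{t \in \mathcal{T}(k)} \bm{\mu}_k^t(l) \;=\; \bm{\psi}_k(l) + \sum_{t \in \mathcal{T}(k)} C_k^t(l),
\end{align*}
is invariant over the block feasible set. The elementary inequality $\sum_t \max_l a_t(l) \ge \max_l \sum_t a_t(l)$ then supplies the block-invariant lower bound $\max_l \sum_{t \in \mathcal{T}(k)} \bm{\mu}_k^t(l)$ on the block objective.

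Finally, I would verify that the proposed update realizes this bound. A direct substitution shows the updated max-marginals satisfy $\tilde{\bm{\mu}}_k^t(l) = \frac{1}{|\mathcal{T}(k)|}\sum_{\bar t \in \mathcal{T}(k)} \bm{\mu}_k^{\bar t}(l)$, identical across $t$; taking $\max_l$ and summing over the $|\mathcal{T}(k)|$ covering trees produces exactly $\max_l \sum_{\bar t} \bm{\mu}_k^{\bar t}(l)$, matching the lower bound. Feasibility is immediate, since summing the corrections over $t \in \mathcal{T}(k)$ telescopes to zero and hence $\sum_t \tilde{\bm{\psi}}_k^t = \bm{\psi}_k$ is preserved. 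The only subtlety I expect is confirming rigorously that $C_k^t(l)$ does not depend on $\bm{\psi}_k^t$ itself, so that $\sum_t \bm{\mu}_k^t(l)$ is a genuine block invariant; once that observation is pinned down, the proof reduces to the clean matching of a lower bound achieved by equalizing the site-$k$ max-marginals across all covering sub-problems.
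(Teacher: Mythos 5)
Your proof is correct, but it reaches the lemma by a different route than the paper. The paper writes the single-site block problem in epigraph form as an explicit LP (Eq.~\eqref{eqn:LP_dual_coordinate_descent}), passes to its LP dual, simplifies using the constraint structure, and concludes optimality by exhibiting a zero duality gap at the updated point. You instead avoid duality altogether: you use the affine dependence $\bm{\mu}^t_k(l)=\bm{\psi}^t_k(l)+C^t_k(l)$ (with $C^t_k(l)$ independent of $\bm{\psi}^t_k$, which is exactly what the definition of the max-marginal gives, since the inner maximization excludes site $k$'s unary term) to show that $\sum_{t\in\mathcal{T}(k)}\bm{\mu}^t_k(l)$ is invariant over the feasible block, apply $\sum_t\max_l \ge \max_l\sum_t$ to get a feasible-set-independent lower bound, and verify the update equalizes the max-marginals across $t\in\mathcal{T}(k)$ and attains that bound while preserving $\sum_t\bm{\psi}^t_k=\bm{\psi}_k$. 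Your lower bound is, in effect, the paper's simplified dual \eqref{eqn:LP_dual_coordinate_descent_dual_simplest} evaluated at vertices of the simplex, so the two arguments are pinned to the same quantity, but yours is more elementary and makes explicit the invariance that the paper leaves implicit in its constraint set (and in its claim that the block objective value is unchanged in aggregate). What the paper's LP formulation buys is a template that carries over to the smoothed-max setting of Sec.~\ref{sec:app_monotone_updates_smoothed_max}, where the $\max$-sum inequality does not transfer verbatim and is replaced by an AM--GM/KKT argument; what your version buys is a shorter, self-contained proof with no need to check strong duality. One point to state explicitly if you write it up: feasibility of the starting point ($\sum_{t\in\mathcal{T}(k)}\bm{\psi}^t_k=\bm{\psi}_k$) is assumed, and the correction terms sum to zero, so the invariant value of $\sum_t\bm{\mu}^t_k(l)$ before and after the update coincide --- that is what lets you identify the attained value with the bound.
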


\begin{proof}
We want to optimize the following linear program with respect to the dual variables (note that the primal variables $\{ \bm{x}^t \}$ are included in the max-energy terms $\bm{\mu}^t$)

\begin{align}
\label{eqn:LP_dual_coordinate_descent}
\min_{\substack{\bm{\psi}^t_k \in \mathbb{R}^{|\mathcal{L}|} \\ \bm{\mu}^t \in \mathbb{R}^{|\mathcal{L}|} \\ \forall t \in \mathcal{T}(k)}} \quad & \sum_{t \in \mathcal{T}(k)} \bm{\mu}^t \\
\text{s.t.} \quad & \hat{\bm{\mu}}^t_k(l_k) + \bm{\psi}^t_k(l_k) - \hat{\bm{\psi}}^t_k(l_k) \leq \bm{\mu}^t, \forall t \in \mathcal{T}(k), \forall l_k \in \mathcal{L} \nonumber \\
& \sum_{t \in \mathcal{T}(k)} \bm{\psi}^t_k(l_k)= \bm{\psi}_k(l_k), \quad \forall l_k \in \mathcal{L} \nonumber
\end{align}

where $\hat{\bm{\mu}}^t_k(l_k)$ and $\hat{\bm{\psi}}^t_k(l_k)$ in the first set of constraints that are max-marginals and unary potentials at location $k$ after applying an update rule, \eg\ Eq.~\eqref{eqn:update_rule_single}.  This set of constraints is derived from the fact that $\hat{\bm{\mu}}^t_k(l_k) + \bm{\psi}^t_k(l_k) - \hat{\bm{\psi}}^t_k(l_k) = \bm{\mu}^t_k(l_k), \forall l_k \in \mathcal{L}$ and $\bm{\mu}^t = \max_{l_k \in \mathcal{L}} \bm{\mu}^t_k(l_k)$.

Converting Eq.~\eqref{eqn:LP_dual_coordinate_descent} to dual form we have:

\begin{align}
\label{eqn:LP_dual_coordinate_descent_dual}
\begin{split}
\max_{\substack{\bm{\alpha}^t \geq 0, \forall t \in \mathcal{T}(k) \\ \bm{\beta}(l_k) \in \mathbb{R}^{|\mathcal{L}|}}} \quad & \sum_{l_k \in \mathcal{L}} \bm{\beta}(l_k) \cdot \bm{\psi}_k(l_k) \\
& + \sum_{t \in \mathcal{T}(k)} \sum_{l_k \in \mathcal{L}} \bm{\alpha}^t(l_k) \left( \hat{\bm{\mu}}^t_k(l_k) - \hat{\bm{\psi}}^t_k(l_k) \right) \\
\text{s.t.} \quad & 1 - \sum_{l_k \in \mathcal{L}} \bm{\alpha}^t(l_k) = 0, \quad \forall t \in \mathcal{T}(k) \\
& \bm{\beta}(l_k) - \bm{\alpha}^t(l_k) = 0, \quad \forall t \in \mathcal{T}(k)
\end{split}
\end{align}

It can be inferred from the second set of constraints that the terms $\bm{\alpha}^t(l_k) = \bm{\alpha}^{\bar{t}}(l_k), \forall (t, \bar{t}) \in \mathcal{T}(k) \times \mathcal{T}(k)$, setting $\bm{\beta}(l_k) = \bm{\alpha}^{\bar{t}}(l_k)$ for any $\bar{t} \in \mathcal{T}(k)$ for all $l_k \in \mathcal{L}$, the optimization becomes:

\begin{align}
\label{eqn:LP_dual_coordinate_descent_dual_simple}
\begin{split}
\max_{\bm{\alpha}^{\bar{t}} \geq 0} \quad & \sum_{l_k \in \mathcal{L}} \bm{\alpha}^{\bar{t}}(l_k) \cdot \left (\bm{\psi}_k(l_k) - \sum_{t \in \mathcal{T}(k)} \hat{\bm{\psi}}^t_k(l_k) \right) \\
& + \sum_{l_k \in \mathcal{L}} \sum_{t \in \mathcal{T}(k)} \bm{\alpha}^{\bar{t}}(l_k) \cdot \hat{\bm{\mu}}^t_k(l_k)   \\
\text{s.t.} \quad & 1 - \sum_{l_k \in \mathcal{L}} \bm{\alpha}^{\bar{t}}(l_k) = 0
\end{split}
\end{align}

Note that update rule Eq.~\eqref{eqn:update_rule_single} always satisfy that $\sum_{t \in \mathcal{T}(k)} \hat{\bm{\psi}}^t_k = \bm{\psi}_k$, thus we can further simplify Eq.~\eqref{eqn:LP_dual_coordinate_descent_dual_simple} as:

\begin{align}
\label{eqn:LP_dual_coordinate_descent_dual_simplest}
\max_{\bm{\alpha}^{\bar{t}} \geq 0} \quad & \sum_{l_k \in \mathcal{L}} \sum_{t \in \mathcal{T}(k)} \bm{\alpha}^{\bar{t}}(l_k) \cdot \hat{\bm{\mu}}^t_k(l_k) \\
\text{s.t.} \quad & 1 - \sum_{l_k \in \mathcal{L}} \bm{\alpha}^{\bar{t}}(l_k) = 0 \nonumber
\end{align}

Note that by applying Eq.~\eqref{eqn:update_rule_single} we have $\hat{\bm{\mu}}^t_k = \hat{\bm{\mu}}^{\bar{t}}_k, \forall (t, \bar{t}) \in \mathcal{T}(k) \times \mathcal{T}(k)$, thus Eq.~\eqref{eqn:LP_dual_coordinate_descent_dual_simplest} = $\max_{l_k \in \mathcal{L}} \sum_{t \in \mathcal{T}(k)} \hat{\bm{\mu}}^t_k (l_k)$ = $\sum_{t \in \mathcal{T}(k)} \hat{\bm{\mu}}^t$.  It is also straightforward to show that by applying Eq.~\eqref{eqn:update_rule_single} to a single location, we have $\sum_{t \in \mathcal{T}(k)} \hat{\bm{\mu}}^t = \sum_{t \in \mathcal{T}(k)} \bm{\mu}^t$, thus Eq.~\eqref{eqn:LP_dual_coordinate_descent} = Eq.~\eqref{eqn:LP_dual_coordinate_descent_dual_simplest}, \ie\ the duality gap of the LP is 0 and update Eq.~\eqref{eqn:update_rule_single} is an optimal coordinate update step.

\end{proof}

In practice we could update \textit{all} $\bm{\psi}$'s at once, which, although it may result in slower convergence, may permit efficient parallel algorithms for modern GPU architectures.  The intuition is that neural-networks can minimize the empirical risk with respect to our coordinate-descent algorithm, since output from any coordinate-descent step is (sub-)differentiable. 

\begin{theorem}
\label{theo_1}
The following update to $\bm{\psi}^t_i, \forall i \in V, t \in \mathcal{T}(i)$ will not increase the objective~\eqref{eqn:MAP_dual_decomposition_reparam}:

\begin{align}
\label{eqn:update_rule_all}
 \bm{\psi}_i^t(l_i) \leftarrow \bm{\psi}_i^t(l_i) - \frac{1}{|V^0|} \left(\bm{\mu}_i^t(l_i) - \frac{1}{|\mathcal{T}(i)|} \sum_{\bar{t} \in \mathcal{T}(i)} \bm{\mu}_i^{\bar{t}}(l_i) \right), \nonumber \\
 \forall i \in V, t \in \mathcal{T} (i), l_i \in \mathcal{L}
\end{align}

where $|V^0| = \max_{t \in \mathcal{T}} |V^t|$ and $|V^t|$ denotes the number of vertices in sub-problem $t$.
\end{theorem}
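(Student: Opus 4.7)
The plan is to reduce the proposed simultaneous update to a convex combination of single-vertex updates, then invoke Lemma~\ref{lemm_1} on each of those. First, I would set up notation: write $F_t(\bm{\psi}^t) = \bm{\mu}^t$ for the max-energy of tree $t$, and observe that $F_t$ is convex in $\bm{\psi}^t$ because it is a pointwise maximum over $\bm{x}^t\in\mathcal{X}^{\mathcal{G}}$ of affine functions of $\bm{\psi}^t$. Let $\bm{\delta}^t_i$ denote the vector in the space of $\bm{\psi}^t$ that is zero on all unaries except at vertex $i$, where it takes the value $\bm{\mu}^t_i(\cdot)-\frac{1}{|\mathcal{T}(i)|}\sum_{\bar t\in\mathcal{T}(i)}\bm{\mu}^{\bar t}_i(\cdot)$. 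With this notation, the update of Eq.~\eqref{eqn:update_rule_all} reads $\bm{\psi}^t_{\text{new}}=\bm{\psi}^t-\frac{1}{|V^0|}\sum_{i\in V^t}\bm{\delta}^t_i$, and a quick check shows $\sum_{t\in\mathcal{T}(i)}\bm{\delta}^t_i=0$, so the constraint $\sum_{t\in\mathcal{T}(i)}\bm{\psi}^t_i=\bm{\psi}_i$ is preserved.

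Next, for each tree $t$ set $\beta_t=|V^t|/|V^0|\in(0,1]$ and rewrite the simultaneous update as the convex combination
\[
\bm{\psi}^t_{\text{new}}=\frac{1}{|V^t|}\sum_{i\in V^t}\bigl(\bm{\psi}^t-\beta_t\bm{\delta}^t_i\bigr).
\]
Applying Jensen's inequality to the convex function $F_t$, then invoking convexity once more along the segment from $\bm{\psi}^t$ (at $\beta_t=0$) to $\bm{\psi}^t-\bm{\delta}^t_i$ (at $\beta_t=1$), would yield
\[
F_t(\bm{\psi}^t_{\text{new}})\le(1-\beta_t)\,F_t(\bm{\psi}^t)+\frac{\beta_t}{|V^t|}\sum_{i\in V^t}F_t(\bm{\psi}^t-\bm{\delta}^t_i).
\]

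I would then sum over $t\in\mathcal{T}$, swap the double sum via $\sum_t\sum_{i\in V^t}=\sum_{i\in V}\sum_{t\in\mathcal{T}(i)}$, and invoke Lemma~\ref{lemm_1} in the form $\sum_{t\in\mathcal{T}(i)}F_t(\bm{\psi}^t-\bm{\delta}^t_i)\le\sum_{t\in\mathcal{T}(i)}F_t(\bm{\psi}^t)$ for each $i\in V$. Using $\beta_t/|V^t|=1/|V^0|$ and the incidence identity $\sum_{i\in V}\sum_{t\in\mathcal{T}(i)}F_t(\bm{\psi}^t)=\sum_{t\in\mathcal{T}}|V^t|\,F_t(\bm{\psi}^t)$, the $(1-\beta_t)$ and $\beta_t$ contributions should collapse to exactly $\sum_tF_t(\bm{\psi}^t)$, giving the desired monotonicity.

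The main obstacle is choosing and justifying the scaling $1/|V^0|$. It is what simultaneously guarantees $\beta_t\in[0,1]$ for every tree (so the segment-wise convexity bound is legitimate tree-by-tree) and makes the accounting telescope exactly: a larger step would leave an uncancelled positive residual from the two Jensen bounds, while a smaller one would still be safe but slower. Keeping the double-sum swap straight and verifying that the telescoped coefficients sum to one is the main piece of bookkeeping.
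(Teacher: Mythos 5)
Your proposal is correct and follows essentially the same route as the paper's proof: exploit convexity of the max-energy $\bm{\mu}^t$ in $\bm{\psi}^t$, write the $\tfrac{1}{|V^0|}$-step simultaneous update as a convex combination involving the single-location updates, apply Jensen's inequality, swap the double sum, and invoke the single-location optimality (Lemma~\ref{lemm_1}, equivalently the LP weak-duality bound the paper cites) to conclude each per-vertex contribution is non-increasing. The only cosmetic difference is that you split the convexity argument into two steps (equal-weight average plus segment convexity with $\beta_t=|V^t|/|V^0|$) where the paper uses one Jensen step with weights $\tfrac{|V^0|-|V^t|}{|V^0|}$ and $\tfrac{1}{|V^0|}$; the bookkeeping is identical.
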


\begin{proof}
We denote $\hat{\bm{\mu}}^t$ as the max-energy of sub-problem $t$ after we apply update Eq.~\eqref{eqn:update_rule_all} to $\{ \bm{\psi}^t \}$ and, with slight abuse of notations, $\bar{\bm{\mu}}^t_i$ as max-energy of sub-problem $t$ after we apply update Eq.~\eqref{eqn:update_rule_single} for location $i$.  Consider changes in objective from updating $\{ \bm{\psi}^t \}$ according to Eq.~\eqref{eqn:update_rule_all}:

\begin{align}
\label{eqn:change_in_dual_obj}
&- \sum_{t \in \mathcal{T}} \bm{\mu}^t + \sum_{t \in \mathcal{T}} \hat{\bm{\mu}}^t
\end{align}

We briefly expand $\bm{\mu}^t$ as:

\begin{align}
\label{eqn:updated_max_energy}
\bm{\mu}^t = \max_{\bm{x}^t} \quad & \sum_{i \in V} \bm{x}_i^t \cdot \bm{\psi}_i^t + \sum_{ij \in E} \bm{x}_{ij}^t \cdot \bm{\phi}_{ij} \\
\text{s.t.} \quad & \bm{x}^t \in \mathcal{X}^{\mathcal{G}} \nonumber
\end{align}

It is clear that $\bm{\mu}^t$ is a convex function of $\bm{\psi}^t$ because $\bm{\mu}^t$ is the maximum of a set of affine functions (each of which is defined by a point in $\mathcal{X}^{\mathcal{G}}$) of $\bm{\psi}^t$.  When $|V^0| = \max_{t \in \mathcal{T}} |V^t|$ we can apply Jensen's Inequality to the second term of Eq.~\eqref{eqn:change_in_dual_obj}: 

\begin{align}
Eq.~\eqref{eqn:change_in_dual_obj} \leq &- \sum_{t \in \mathcal{T}} \bm{\mu}^t + \sum_{t \in \mathcal{T}} \left( \frac{|V^0| - |V^t|}{|V^0|} \bm{\mu}^t + \sum_{i \in V^t} \frac{1}{|V^0|} \bar{\bm{\mu}}^t_i \right) \nonumber \\
= & \frac{1}{|V^0|} \sum_{t \in \mathcal{T}} \sum_{i \in V^t} \left( \bar{\bm{\mu}}^t_i - \bm{\mu}^t \right) \nonumber \\
= & \frac{1}{|V^0|} \sum_{i \in V^t} \sum_{t \in \mathcal{T}(i)} \left( \bar{\bm{\mu}}^t_i - \bm{\mu}^t \right) \nonumber
\end{align}

Observe that $\sum_{t \in \mathcal{T}(i)} \bar{\bm{\mu}}^t_i$ corresponds to Eq.~\eqref{eqn:LP_dual_coordinate_descent_dual_simplest} while $\sum_{t \in \mathcal{T}(i)} \bm{\mu}^t$ correspond to Eq.~\eqref{eqn:LP_dual_coordinate_descent}.  Since Eq.~\eqref{eqn:LP_dual_coordinate_descent} and Eq.~\eqref{eqn:LP_dual_coordinate_descent_dual_simplest} are primal and dual of an LP we have Eq.~\eqref{eqn:LP_dual_coordinate_descent_dual_simplest} $\leq$ Eq.~\eqref{eqn:LP_dual_coordinate_descent}, thus $\sum_{t \in \mathcal{T}(i)} \left( \bar{\bm{\mu}}^t_i - \bm{\mu}^t \right) \leq 0$, which finally results in Eq.~\eqref{eqn:change_in_dual_obj} $\leq$ 0 and thus the update Eq.~\eqref{eqn:update_rule_all} constitutes a non-increasing step to objective Eq.~\eqref{eqn:MAP_dual_decomposition_reparam}.

\end{proof}

We note that our update rules are novel and different from typical tree-block coordinate descent~\cite{sontag2009tbcd} and its variant~\cite{yarkony2010covering}.  Eq.~\eqref{eqn:update_rule_all} is mostly similar to that in the Fig. 1 of~\cite{sontag2009tbcd}, however we avoid reparameterization of the edge potentials which is expensive when forming coordinate-descent as differentiable layers, as the memory consumption grows linearly with the number of coordinate-descent steps while each step requires $\mathcal{O}(|E| \times |\mathcal{L}|^2)$ memory for storing edge potentials.  Our update rules are also similar to the fixed-point update in~\cite{yarkony2010covering}, however their monotone fixed-point update works on a single covering tree thus is not amenable for parallel computation. Furthermore, for simultaneous update to all locations, we prove a monotone update step-size of $\frac{1}{\max_{t \in T} |V^t|}$ while~\cite{yarkony2010covering} only proves an monotone update step-size of $\frac{1}{|V|}$, which is much smaller.  Our entire coordinate-descent/fixed-point update algorithm is described as Alg.~\ref{alg:DD_fixed-point}

\begin{algorithm}
\caption{Fixed-point Algorithm for Dual-Decomposition}
\begin{algorithmic}[1] 
  \State Initialize $\{ \bm{\phi}^t, \bm{\psi}^t \}$ according to Eq.~\eqref{eqn:dual_decomposition_scores}
  \State Positive step-size $\alpha=\frac{1}{\max_{t \in \mathcal{T}} |V^t|}$
  \Repeat
    \For{$t \in \mathcal{T}$}
      \For{$l \in {1, \dots, L}$}
        \State Compute max-marginal via dyamic programming of state $l$ of vertex $i$ in tree $t$ as $\bm{\mu}_i^t(l)$    \label{alg:line:DP}
      \EndFor
      \State Compute optimizing state that minimizes $\bm{\mu}_i^t$ as $\bm{x}_i^{t*}$
    \EndFor
    \For{$t \in \mathcal{T}$}
      \For{$l \in {1, \dots, L}$}
        \State $\bm{\psi}_i^t(l) \leftarrow \bm{\psi}_i^t(l) - \alpha (\bm{\mu}_i^t(l) - \frac{1}{|\mathcal{T}(i)|} \sum_{\bar{t} \in \mathcal{T}(i)} \bm{\mu}_i^{\bar{t}}(l))$
      \EndFor
    \EndFor
    \Until $\bm{x}_i^{t*} = \bm{x}_i^{\bar{t}*}, \forall i \in V, \forall (t, \bar{t}) \in \mathcal{T}(i) \times \mathcal{T}(i)$
\end{algorithmic}
\label{alg:DD_fixed-point}
\end{algorithm}

There are several advantages of Alg~\ref{alg:DD_fixed-point}: 1) it is easily parallelizable with modern deep learning frameworks, as all sub-problems $t \in \mathcal{T}$ can be solved in parallel and all operations can be written in forms of \textit{sum} and \textit{max} of matrices. 2) Unlike subgradient-based methods~\cite{komodakis2007mrf,komodakis2011m3}, fixed-point update is fully sub-differentiable (it is not differentiable everywhere because of the usage of $max$ operators when computing the max-marginals. We will elaborate on this in section~\ref{sec:DDD_with_smoothed_max}), and thus the entire algorithm is
sub-differentiable except for the decoding of the final primal solution, which will be addressed in the next section.

\section{Decoding the Primal Results and Training}
\label{sec:decode_and_train}
Alg~\ref{alg:DD_fixed-point} does not always converge and it is thus desirable to be able to decode primal results given some intermediate max-marginals $\{\bm{\mu}^t\}$ with $\{ \bm{x}^{t*} \}$ that do not necessarily agree on each other.

A simple way to decode $\bm{x}_i^*$ given $\bm{\mu}_i^t(l), \forall l \in \{1, \dots L\}, \forall t \in \mathcal{T}(i)$ would be

\begin{align}
\label{eqn:primal_decode}
    \bm{x}_i^* = \argmax_{l \in \{1, \dots, L\}} \sum_{t \in \mathcal{T}(i)} \bm{\mu}_i^t(l)  
\end{align}

Of course, Eq.~\eqref{eqn:primal_decode} is neither differentiable nor sub-differentiable and is the only non-differentiable part of the whole dual-decomposition algorithm.  One simple solution would be using softmax on $\sum_{t \in \mathcal{T}(i)} \bm{\mu}_i^t$ (note $\bm{\mu}^t_i$ is a vector of length $L$) to output a probability distribution over $L$ labels:

\begin{align}
\label{eqn:softmax_decode}
    p(\bm{x}_i) = \frac{\exp(\sum_{t \in \mathcal{T}(i)} \bm{\mu}_i^t)}{\sum_{l' \in \{1, \dots, L\}} \exp(\sum_{t \in \mathcal{T}(i)} \bm{\mu}_i^t(l'))}
\end{align}

we can then use one-hot encoding of pixel-wise labels as ground-truth, and directly compute cross-entropy loss and gradients using softmax of the max-marginals, which is itself sub-differentiable and thus the entire system is end-to-end trainable with respect to CNN parameters $\bm{\theta}$.

\section{Differentiable Dual-Decomposition with Smoothed-Max Operator}
\label{sec:DDD_with_smoothed_max}
Note that line~\ref{alg:line:DP} of Alg.~\ref{alg:DD_fixed-point} is dynamic programming over trees. Formally, we can expand max-marginals of any $i \in V, t \in \mathcal{T}$ recursively as:

\begin{align}
\label{eqn:DP_max}
    \bm{\mu}_i^t(l) = \bm{\psi}_i^t(l) + \sum_{j \in \mathcal{C}(i)} \max_{l'} (\bm{\phi}_{ij}(l, l') + \bm{\mu}_j^t(l'))
\end{align}

where $\mathcal{C}(i)$ denotes the set of neighbors of node $i$.  Notice that the $max$ operator is only sub-differentiable; at first look this may not seem to be a problem as common activation functions in neural networks such as ReLU and leaky ReLU are also based on $max$ operation and are thus sub-differentiable.  However in practice the parameters for generating $\{\bm{\psi}, \bm{\phi}\}$ terms are often initialized from zero-mean uniform or gaussian distributions, which means that at the start of training $\{\bm{\psi}, \bm{\phi}\}$ are near-identical over classes and locations while $\max (\cdot)$ over such terms can be quite random.  In the backward pass, the gradient $\partial L / \partial \bm{\mu}^t_i (l)$ will only flow through the maximum of the $max$ operator, which, as we observed in our experiments, hinders the progress of learning drastically and often results in inferior training and testing performance.

To alleviate the aforementioned problem, we propose to employ the smoothed-max operator introduced in~\cite{DDP}.  Specifically, we implement the smoothed-max operator with negative-entropy regularization, whose forward-pass is the log-sum-exp operator while the backward-pass is softmax operator.  We denote $\bm{y} = \{y_1, \dots, y_L\}$ as input of $L$ real values, then we can define the forward pass and its gradient for the smoothed-max operator $\max_{\Omega}$ as:

\begin{align}
    \max\nolimits_{\Omega}(\bm{y}) &= \gamma \log \left( \sum_l \exp (y_l / \gamma) \right) \label{eqn:smoothed_max} \\
    \nabla \max\nolimits_{\Omega}(\bm{y}) &= \frac{\exp(\bm{y} / \gamma)}{\sum_l \exp (y_l / \gamma)} \label{eqn:smoothed_max_grad}
\end{align}
 
where $\gamma$ is a positive value that controls how strong the convex regularization is. Note that the gradient vector Eq.~\eqref{eqn:smoothed_max_grad} is just softmax over input values. This ensures that gradients from the loss layer can flow equally through input logits when inputs are near-identical, making the training process much easier at initialization.

An important result from~\cite{DDP} is that the negative-entropy regularized smoothed-max operator Eq.~\eqref{eqn:smoothed_max} satisfies associativity and distributivity, and for tree-structured graphs the smoothed-max-energy computed via recursive dynamic program is equivalent to the smoothed-max over the combinatorial space $\mathcal{X}^{\mathcal{G}}$.  When replacing standard max with smoothed-max Eq.~\eqref{eqn:smoothed_max}, Alg.~\ref{alg:DD_fixed-point} optimizes a convex upper-bound of Eq.~\eqref{eqn:MAP_dual_decomposition_reparam}, while both Lemma~\ref{lemm_1} and Theorem~\ref{theo_1} will still hold; we provide rigorous proof in Sec.~\ref{sec:app_monotone_updates_smoothed_max} as for why they hold.  We observe much faster convergence of Alg.~\ref{alg:DD_fixed-point} in practice when smoothed-max is used with a reasonable $\gamma$, \eg\ $\gamma=1.0$ or $2.0$.

% Another observation can be made about Eq.~\eqref{eqn:smoothed_max} is that it approaches the original sub-differentiable max operator when $\gamma \rightarrow 0$.  Intuitively, one could employ an ``annealing'' strategy, gradually decreasing $\gamma$ across the training process so that we could avoid overly-sparse gradient at the beginning of training while enforcing sharp gradient in later stages of training so as to approach the original convex upper-bound of Eq.~\eqref{eqn:MAP_dual_decomposition}.  Rather than tuning for a suitable annealing strategy, we note that the partial derivative of $\max_{\Omega}(\bm{y})$ with respect to $\gamma$ can be written as:
% 
% \begin{align}
% \label{eqn:smoothed_max_gamma_grad}
%     \frac{\partial \max_{\Omega}(\bm{y})}{\partial \gamma} &=  \log \left( \sum_l \exp (y_l / \gamma) \right) + \frac{\sum_l (y_l' \cdot \gamma - y_l) \cdot \exp(y_l / \gamma) }{\gamma \sum_l \exp(y_l / \gamma) }
% \end{align}
% 
% where $y_l'$ denotes the partial derivative of $y_l$ with respect to $\gamma$, which can be computed recursively during forward dynamic programming and stored for later use in gradient computation.  Thus we can learn $\gamma$ efficiently along with other model parameters during training.  In our experiments we initialize $\gamma$ to be one and clip $\gamma$ to 0.01 from below whenever it becomes smaller than 0.01 during training.

\paragraph{Efficient Implementation of the Forward/Backward Pass} While the aforementioned dynamic programs can be implemented directly using basic operators of PyTorch~\cite{paszke2017automatic} along with its AutoGrad feature, we found it quite inefficient in practice, and opted to implement our own parallel dynamic program layer for computing horizontal/vertical marginals on pixel-grid (line~\ref{alg:line:DP} for Alg.~\ref{alg:DD_fixed-point}) which is up to 10x faster than a naive PyTorch implementation.  In Sec.~\ref{sec:app_parallel_dp} we will derive in detail how forward and backward passes for computing the max-marginals and its gradients on a $M \times M$ pixel grid can be efficiently implemented as parallel algorithm with time complexity of $\mathcal{O}(M |\mathcal{L}|)$, for both max-operator and smoothed-max operator. 

\paragraph{Relation to Marginal Inference}  We note that the objective of dual-decomposition with smoothed-max (Eq.~\eqref{eqn:MAP_dual_decomposition_smoothed_reparam} in Sec.~\ref{sec:app_monotone_updates_smoothed_max}) corresponds to tree-reweighted belief propagation (TRBP) objective that bounds the partition function (\ie\ , $Z$ in Eq.~\eqref{eqn:CRF_posterior}) with decomposition of graph into trees~\cite{Wainwright_2005_TRBP,Meltzer_2009_UAI,Jancsary_2011_AISTATS,Domke_2011_AAAI}.  For minimizing bounds on partition function, our approach optimizes a similar objective to~\cite{Domke_2011_AAAI} but we employ a monotone, highly parallel coordinate descent approach while~\cite{Domke_2011_AAAI} uses gradient-based approach.  Also, as observed in~\cite{Domke_2011_AAAI}, dual-decomposition represents no overhead when each edge is covered exactly once; but if more complicated tree bound needs to be used one could minimize over edge appearance probabilities via TRBP as described in~\cite{Wainwright_2005_TRBP}.

\section{Experiments}
\label{sec:experiments}
We evaluate our proposed approach on the PASCAL VOC 2012 semantic segmentation benchmark\cite{pascal-voc-2012}.  We use average pixel intersection-over-union (mIoU) of all foreground as the performance measure, as was used in most of the previous works.

\paragraph{PASCAL VOC} is the dataset widely used to benchmark for various computer vision tasks such as object detection and semantic image segmentation.  Its semantic image segmentation benchmark consists 21 classes including background.  The train, validation and test splits contain 1,464, 1,449 and 1,456 images respectively, with pixel-wise semantic labeling.  We also make use of additional annotations provided by SBD dataset~\cite{BharathICCV2011} to augment the original VOC 2012 \textit{train} split, resulting a final \textit{trainaug} set with 10,582 images.  We conduct our ablation study by training on \textit{trainaug} set and evaluate on \textit{val} set.

\paragraph{Implementation details} \quad We re-implement DeepLab V3~\cite{DeepLabV3} in PyTorch~\cite{paszke2017automatic} with block4-backbone (\ie\ no additional block after backbone except for one 1x1 convolution, two 3x3 convolutions and one final 1x1 convolution for classification) and Atrous Spatial Pyramid Pooling (ASPP) as baseline.  The output stride is set to 16 for all models.  We use ResNet-50~\cite{ResNet} and Xception-65~\cite{Xception} as backbones.
% and Xception-65 for evaluating on \textit{test} set.
For training all models, we employ stochastic gradient descent (SGD) optimizer with momentum of 0.9 and ''poly" learning rate policy with intial learning rate 0.007 for $30k$ steps.
% We use 1x learning rate for pre-trained backbones and 10x learning rate for newly initialized layers, including ASPP module, final unary/pairwise heads and temperature parameters of smoothed-max operator.
For training ResNet-based models, we use weight decay of 0.0001, while for training Xception-based models we use weight decay of 0.00004.  For data-augmentation we apply random color jittering, random scaling from 0.5 to 1.5, random horizontal flip, random rotation between -10\degree and 10\degree, and finally randomly crop $513 \times 513$ image patch of the transformed image.  Synchronized batch normalization with batch size of 16 is used for all models as standard for training semantic segmentation models.  For dual-decomposition augmented models we obtain the pairwise potentials by applying a fully-connected layer over concatenated features of pairs of locations on the feature map just before the final layer of baseline model, and run several steps of fixed-point iteration (FPI) of Alg.~\ref{alg:DD_fixed-point} using output logits from unary and pairwise heads for both training and inference, and keep all other configurations exactly the same as baselines.

% \subsection{Ablation Study}
% \label{sec:alblation_study}
\subsection{Results}
\label{sec:alblation_study}

\begin{table}
 \begin{center}
 \renewcommand{\tabcolsep}{3.0pt}
 % {\small
 \begin{tabular}{ l | c | c | c |}
 % \multicolumn{2}{c}{MOTA on other sequences} \\
 \hline
 Method & Backbone & \#+Params & mIoU \\ \hline
 % DUpsampling~\cite{Tian2019DUpsampling} & ResNet-50 & - & 74.2 \\ \hline
 baseline-block4~\cite{DeepLabV3} & ResNet-50 & 1.6M & 64.3 \\
 +15FPI & ResNet-50 & 1.8M & \textbf{68.6} \\
 +ASPP & ResNet-50 & 14.8M & 73.8 \\
 +ASPP+5FPI & ResNet-50 & 15.0M & \textbf{74.4} \\ \hline
 +ASPP & Xception-65 & 14.8M & 77.8 \\
 +ASPP+15FPI & Xception-65 & 15.0M & \textbf{78.0} \\ \hline
 \end{tabular}
 % }
 \end{center}
 \vspace{0.05in}
 \caption{Ablation study on PASCAL VOC \textit{val} set.  +ASPP indicates the backbone is augmented with atrous spatial pyramid pooling as described in~\cite{DeepLabV3}, +xFPI indicates the backbone is augmented with x iterations of fixed-point updates, and +ASPP+xFPI indicates both are applied.  \#+Params indicates number of new parameters other than those in the backbone.
 %, we do not report this for DUpsampling~\cite{Tian2019DUpsampling} due to it's not open-sourced and authors were unclear whether they used ASPP or not.
 We achieve 50\% of the improvement of ASPP while introducing only 1/60 additional parameters compared to that of ASPP.  With ResNet-50 backbone, we also improve ASPP models when applying FPI on top of ASPP module.}
 \label{tab:ablation_result}
 \end{table}
 
\begin{figure*}
\begin{center}
\begin{tabular}{ c c c c }
Input Image & Ground Truth & DeepLabV3~\cite{DeepLabV3} & DeepLabV3+Ours \\ \hline
\raisebox{-.25\height}{\includegraphics[width=0.2\textwidth]{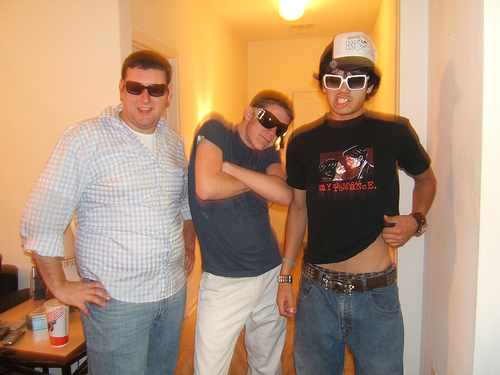}} &
\raisebox{-.25\height}{\includegraphics[width=0.2\textwidth]{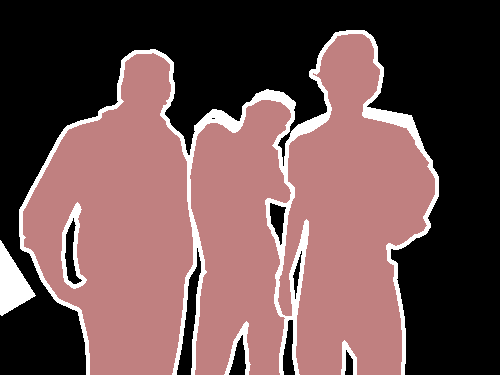}} &
\raisebox{-.25\height}{\includegraphics[width=0.2\textwidth]{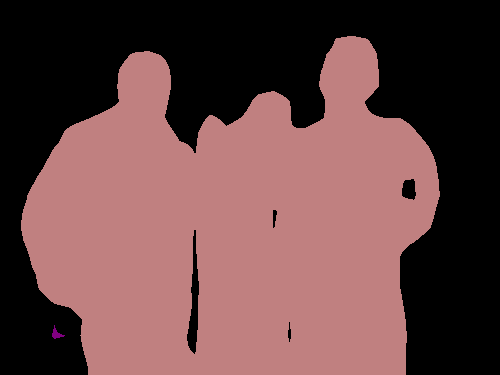}} &
\raisebox{-.25\height}{\includegraphics[width=0.2\textwidth]{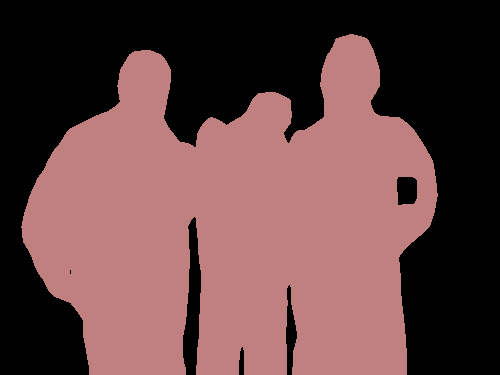}} \\
\\
\raisebox{-.25\height}{\includegraphics[width=0.2\textwidth]{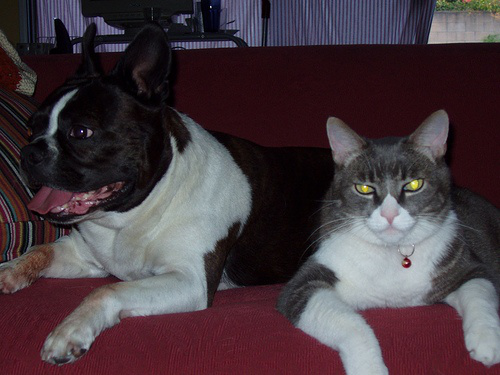}} &
\raisebox{-.25\height}{\includegraphics[width=0.2\textwidth]{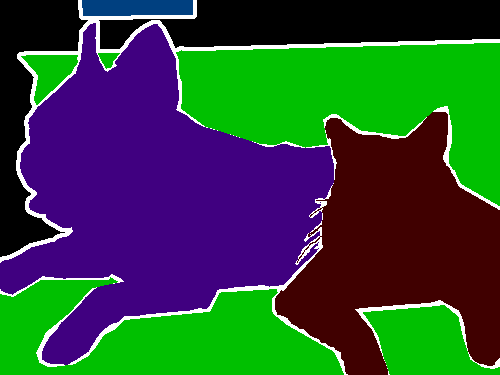}} &
\raisebox{-.25\height}{\includegraphics[width=0.2\textwidth]{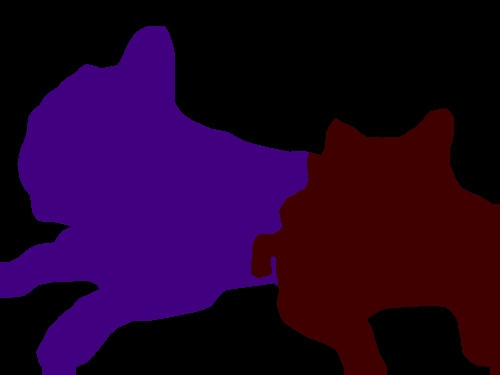}} &
\raisebox{-.25\height}{\includegraphics[width=0.2\textwidth]{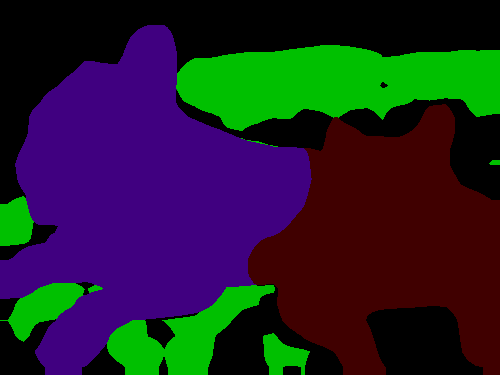}} \\
\\
\raisebox{-.25\height}{\includegraphics[width=0.2\textwidth]{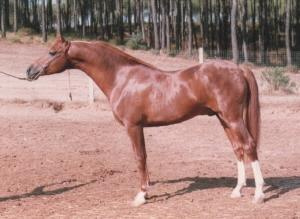}} &
\raisebox{-.25\height}{\includegraphics[width=0.2\textwidth]{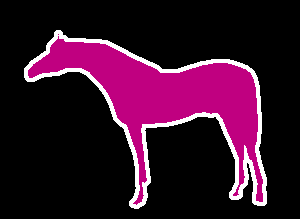}} &
\raisebox{-.25\height}{\includegraphics[width=0.2\textwidth]{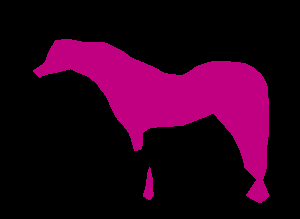}} &
\raisebox{-.25\height}{\includegraphics[width=0.2\textwidth]{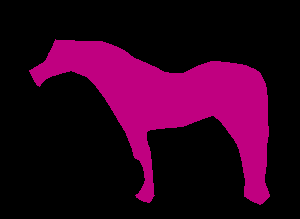}} \\
\end{tabular}
    \caption{Qualitative comparison of \cite{DeepLabV3} (\emph{third column}) and~\cite{DeepLabV3}+our approach (\emph{last column}) with Xception-65 backbone.  With our proposed CRF model and optimization, we obtain much more refined boundary especially in crowded scence (\emph{first row}), we are also able to encode inter-class (\emph{second row}) and intra-class (\emph{third row}) contextual information to boost the potentially weak unary classifier.}
\label{fig:qualitative_results}
\end{center}
\end{figure*}
 
We present experimental results on PASCAL VOC \textit{val} set in Table.~\ref{tab:ablation_result}.  It shows our proposed fixed-point update (FPI) consistently improves performance compared with pure CNN models.  FPI only adds about 0.2M additional parameters while providing 50\% of the improvement ASPP can provide.  It also provides additional 0.6\% and 0.2\% improvement in mIoU when applied after ASPP moduel for ResNet-50 and Xception-65, respecitvely.  More importantly, unlike the popular attention mechanisms~\cite{Wang2018non-local,Huang2019-CCAttn}, our FPI operates directly on the output potentials/logits of CNN instead of intermediate features of it, and works in a parameter-free manner, the additional parameters come from computing pairwise potential which are inputs to FPI.  Qualitative results are shown in Fig.~\ref{fig:qualitative_results}.

\section{Conclusion}
\label{sec:conclusion}
In this paper, we revisited dual-decomposition and derived novel monotone fixed-point updates that is amenable for parallel implementation, with detailed proofs on monotonicity of our update.  We also introduced the smoothed-max operator with negative-entropy regularization from~\cite{DDP} into our framework, resulting in a monotone and fully-differentiable inference method.
%that is situated in-between MAP (mode) inference and marginal (expectation) inference.
We also showed that how CNNs and CRFs can be jointly trained under our framework, improving semantic segmentation accuracy on the PASCAL VOC 2012 dataset over baseline models.  Future work includes extending our framework to more datasets and architectures, and tasks such as human-pose-estimation and stereo matching, all of which can be formulated as general CRF inference problems.  Another direction of research would be exploring unsupervised/semi-supervised learning by enforcing agreement of sub-problems.

{\small
\bibliographystyle{ieee_fullname}
\bibliography{DDD}
}

\appendix

\section{Monotone Fixed-Point Algorithm for Dual-Decomposition with Smoothed-Max}
\label{sec:app_monotone_updates_smoothed_max}
Following the notations in the main paper, we start by restating the typical dual-decomposition objective as:

\begin{align}
\label{eqn:MAP_dual_decomposition_reparam_dupe}
\min_{\{ \bm{\psi}^t \}} \max_{\{\bm{x}^t\}} \quad & \sum_{t \in \mathcal{T}} \left( \sum_{i \in V} \bm{x}_i^t \cdot \bm{\psi}_i^t + \sum_{ij \in E} \bm{x}_{ij}^t \cdot \bm{\phi}_{ij} \right) \\
\text{s.t.} \quad & \bm{x}^t \in \mathcal{X}^{\mathcal{G}}, \quad \forall t \in \mathcal{T} \nonumber \\
& \sum_{t \in \mathcal{T}(i)} \bm{\psi}^t_i = \bm{\psi}_i, \quad \forall i \in V \nonumber
\end{align}

Since $\bm{x}^t$'s are independent of each other $\forall t \in \mathcal{T}$, we can move the $max$ to the right of $\sum_{t \in \mathcal{T}}$:

\begin{align}
\label{eqn:MAP_dual_decomposition_reparam_independent_max}
\min_{\{ \bm{\psi}^t \}} \quad & \sum_{t \in \mathcal{T}} \max_{\bm{x}^t} \quad \left( \sum_{i \in V} \bm{x}_i^t \cdot \bm{\psi}_i^t + \sum_{ij \in E} \bm{x}_{ij}^t \cdot \bm{\phi}_{ij} \right) \\
\text{s.t.} \quad & \bm{x}^t \in \mathcal{X}^{\mathcal{G}}, \quad \forall t \in \mathcal{T} \nonumber \\
& \sum_{t \in \mathcal{T}(i)} \bm{\psi}^t_i = \bm{\psi}_i, \quad \forall i \in V \nonumber
\end{align}

Now we replace $max$ with smoothed-max defined as in Eq.~\eqref{eqn:smoothed_max}, we obtain the following new objective:

\begin{align}
\begin{split}
\label{eqn:MAP_dual_decomposition_smoothed_reparam}
\min_{\{ \bm{\psi}^t \}} \quad & \sum_{t \in \mathcal{T}} \gamma \log \\
&\sum_{ \bm{x}^t \in \mathcal{X}^{\mathcal{G}} } \exp \left( \frac{ \sum_{i \in V} \bm{x}_i^t \cdot \bm{\psi}_i^t + \sum_{ij \in E} \bm{x}_{ij}^t \cdot \bm{\phi}_{ij} }{\gamma} \right) \\
\text{s.t.} \quad & \sum_{t \in \mathcal{T}(i)} \bm{\psi}^t_i = \bm{\psi}_i, \quad \forall i \in V
\end{split}
\end{align}

Formally we define the smoothed-max marginal of state $l_i$ at location $i$ on sub-problem $t$ as ($\mathcal{C}(i)$ denotes the set of neighbors of node $i$):

\begin{align}
\label{eqn:smoothed_max_marginal}
\begin{split}
\bm{\nu}^t_i (l_i) =& \ \bm{\psi}^t_i (l_i) + \\
& \sum_{j \in \mathcal{C}(i)} \gamma \log \left( \sum_{l_j \in \mathcal{L}} \exp \left( \frac{\bm{\nu}^t_j (l_j) + \bm{\phi}^t_{ij} (l_i, l_j)}{\gamma} \right) \right)
\end{split}
\end{align}

And similar to max-marginal vector $\bm{\mu}^t_i$ for sub-problem $t$ location $i$ and max-energy $\bm{\mu}^t$ for sub-problem $t$, we define smoothed max-marginal vector $\bm{\nu}^t_i$ for sub-problem $t$ location $i$ and smoothed-max energy $\bm{\nu}^t$ for sub-problem $t$.  Specifically for smoothed-max energy we have:

\begin{align}
\label{eqn:smoothed_max_energy}
\bm{\nu}^t = \gamma \log \left( \sum_{l_i \in \mathcal{L}} \exp \left( \frac{\bm{\nu}^t_i(l_i)}{\gamma} \right) \right), \forall i \in V^t
\end{align}

Eq.~\eqref{eqn:smoothed_max_energy} is equivalent for any $i \in V^t$ because of an important conclusion from~\cite{DDP}, that the smoothed-max with negative-entropy regularization (\ie\ $logsumexp$) over the combinatorial space of a tree-structured problem equals to the smoothed-max-energy computed via dynamic programs with smoothed-max operator, this formally translates into:

\begin{align}
\label{eqn:smoothed_max_energy_equality}
\bm{\nu}^t = \gamma \log \sum_{ \bm{x}^t \in \mathcal{X}^{\mathcal{G}} } \exp \left( \frac{ \sum_{i \in V} \bm{x}_i^t \cdot \bm{\psi}_i^t + \sum_{ij \in E} \bm{x}_{ij}^t \cdot \bm{\phi}_{ij} }{\gamma} \right)
\end{align}

Now consider fixing the dual variables for all sub-problems at all locations except for those at one location $k$ and optimizing only with respect to the vector $\bm{\psi}^t_{k}, \forall t \in \mathcal{T}(k)$.  We now propose and prove a new monotone update rule for Eq.~\eqref{eqn:MAP_dual_decomposition_smoothed_reparam} under this circumstance.

\begin{lemma}
\label{lemm_1_smooth}
For a single location $k \in V$, the following coordinate update to $\bm{\psi}^t_{k}, \forall t \in \mathcal{T}(k)$ is optimal:
\begin{align}
\label{eqn:update_rule_single_smooth}
 \bm{\psi}_k^t(l_k) \leftarrow \bm{\psi}_k^t(l_k) - \left(\bm{\nu}_k^t(l_k) - \frac{1}{|\mathcal{T}(k)|} \sum_{\bar{t} \in \mathcal{T}(k)} \bm{\nu}_k^{\bar{t}}(l_k) \right), \nonumber \\
 \forall t \in \mathcal{T} (k), l_k \in \mathcal{L}
\end{align}
\end{lemma}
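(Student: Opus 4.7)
The plan is to recast the sub-problem of fixing all dual variables except $\bm{\psi}^t_k$ for $t\in\mathcal{T}(k)$ as a convex program with linear equality constraints, then verify the proposed update via KKT conditions. Since the $\bm{x}^t$'s are independent across $t$, the restriction of~\eqref{eqn:MAP_dual_decomposition_smoothed_reparam} to this block is $\min \sum_{t \in \mathcal{T}(k)} \bm{\nu}^t$ subject to $\sum_{t \in \mathcal{T}(k)} \bm{\psi}^t_k(l_k) = \bm{\psi}_k(l_k)$ for every $l_k\in\mathcal{L}$. Each $\bm{\nu}^t$ is a log-sum-exp (Eq.~\eqref{eqn:smoothed_max_energy_equality}) composed with an affine function of $\bm{\psi}^t_k$, hence convex; the objective is convex with linear equality constraints, so KKT is both necessary and sufficient for global optimality.

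Next I would form the Lagrangian with multipliers $\bm{\beta}(l_k)$ and write stationarity as $\partial \bm{\nu}^t / \partial \bm{\psi}^t_k(l_k) = -\bm{\beta}(l_k)$ for every $t\in\mathcal{T}(k)$ and $l_k\in\mathcal{L}$. Differentiating Eq.~\eqref{eqn:smoothed_max_energy_equality} yields the Gibbs marginal probability that $\bm{x}^t_k(l_k)=1$ under the tempered sub-problem distribution, which by the tree associativity/distributivity of $\max_\Omega$ from~\cite{DDP} can be rewritten as the softmax of the smoothed max-marginals, namely $\exp(\bm{\nu}^t_k(l_k)/\gamma)/\sum_{l'}\exp(\bm{\nu}^t_k(l')/\gamma)$. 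Stationarity therefore amounts to the statement that this softmax is $t$-independent across $\mathcal{T}(k)$, equivalently that $\bm{\nu}^t_k$ agrees across sub-problems up to an $l_k$-independent additive shift.

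Finally I would verify both KKT conditions for the proposed update directly. Summing the update across $t\in\mathcal{T}(k)$ shows that the correction telescopes, giving $\sum_t \hat{\bm{\psi}}^t_k(l_k) = \sum_t \bm{\psi}^t_k(l_k) = \bm{\psi}_k(l_k)$, so feasibility holds. For stationarity, because only the unary at $k$ is modified in sub-problem $t$, the recursion~\eqref{eqn:smoothed_max_marginal} gives $\hat{\bm{\nu}}^t_k(l_k) = \bm{\nu}^t_k(l_k) + (\hat{\bm{\psi}}^t_k(l_k) - \bm{\psi}^t_k(l_k)) = \frac{1}{|\mathcal{T}(k)|}\sum_{\bar t\in\mathcal{T}(k)}\bm{\nu}_k^{\bar t}(l_k)$, which is manifestly independent of $t$; the KKT condition is therefore satisfied. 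The main obstacle I anticipate is the gradient identification step in the second paragraph: establishing rigorously that $\partial \bm{\nu}^t/\partial \bm{\psi}^t_k(l_k)$ equals the softmax of the DP-computed smoothed max-marginals requires leaning on the associativity/distributivity result of~\cite{DDP} to equate DP marginals with the true conditional log-partition functions of the tree sub-problem. Once that identification is granted, the remaining calculations are short and mechanical, and crucially the proof does not need to go through an LP-dual argument as in Lemma~\ref{lemm_1}, since the smoothed objective is already strictly convex and differentiable.
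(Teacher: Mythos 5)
Your proposal is correct, and it reaches the paper's conclusion by a more direct route. The paper starts from the same block problem \eqref{eqn:LP_dual_coordinate_descent_smooth_log}, but then strips the $\gamma\log(\cdot)$, rewrites the objective as a product, parametrizes the update by increments $\bm{\lambda}^t_k$, invokes the AM--GM equality condition to pass from the product to a sum, decomposes per label, and finally runs a Lagrangian-duality argument on each per-label exponential program, concluding via a zero duality gap. You instead apply KKT directly to $\min \sum_{t\in\mathcal{T}(k)}\bm{\nu}^t$ under the affine coupling constraint: the decisive step is identifying $\partial \bm{\nu}^t/\partial \bm{\psi}^t_k(l_k)$ with the Gibbs marginal of the tempered tree distribution, which by \eqref{eqn:smoothed_max_energy_equality} and \eqref{eqn:smoothed_max_energy} equals the softmax of the smoothed max-marginals $\bm{\nu}^t_k(\cdot)$; stationarity then reads ``this softmax is $t$-independent,'' and the update \eqref{eqn:update_rule_single_smooth} achieves exactly that because, by \eqref{eqn:smoothed_max_marginal}, perturbing only $\bm{\psi}^t_k(l_k)$ shifts $\bm{\nu}^t_k(l_k)$ by the same amount (a fact the paper also uses implicitly when it writes $\bm{\nu}^t_k+\bm{\lambda}^t_k$), while feasibility telescopes. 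What your route buys is the elimination of the product/AM--GM detour and of the per-label dual program: optimality follows from convexity plus verified first-order conditions, and the softmax-of-marginals identity you isolate is precisely the gradient formula \eqref{eqn:smoothed_max_grad} that the differentiable implementation uses anyway, so the argument dovetails nicely with the rest of the paper. What the paper's route buys is that it mirrors the unsmoothed Lemma~\ref{lemm_1} almost line by line (LP-style primal/dual with explicit multipliers), making the two proofs structurally parallel. Two small caveats on your write-up: the block objective is convex but not strictly convex (each $\bm{\nu}^t$ is affine along the all-ones direction of $\bm{\psi}^t_k$), so you should claim only convexity plus differentiability, which is all KKT sufficiency needs; and the feasibility telescoping presupposes that the pre-update $\bm{\psi}^t_k$ satisfy $\sum_{t\in\mathcal{T}(k)}\bm{\psi}^t_k=\bm{\psi}_k$, an invariant maintained by the algorithm that is worth stating explicitly.
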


\begin{proof}
We want to optimize the following linear program with respect to $\bm{\psi}^t_k$ (note that $\bm{\nu}^t_k (l_k)$ is a function of $\bm{\psi}^t_k (l_k)$ according to Eq.~\eqref{eqn:smoothed_max_marginal}):

% \begin{align}
% \min_{\substack{\bm{\psi}^t_k \in \mathbb{R}^{|\mathcal{L}|} \\ \bm{\nu}^t \in \mathbb{R}^{|\mathcal{L}|} \\ \forall t \in \mathcal{T}(k)}} \quad & \sum_{t \in \mathcal{T}(k)} \bm{\nu}^t \nonumber \\
% \text{s.t.} \quad & \hat{\bm{\nu}}^t_k(l_k) + \bm{\psi}^t_k(l_k) - \hat{\bm{\psi}}^t_k(l_k) \leq \bm{\nu}^t, \forall t \in \mathcal{T}(k), \forall l_k \in \mathcal{L} \nonumber \\
% & \sum_{t \in \mathcal{T}(k)} \bm{\psi}^t_k(l_k)= \bm{\psi}_k(l_k), \quad \forall l_k \in \mathcal{L} \nonumber
% \end{align}
% 
% which, by plugging Eq.~\eqref{eqn:smoothed_max_energy} into it, becomes:

\begin{align}
\label{eqn:LP_dual_coordinate_descent_smooth_log}
\min_{\substack{\bm{\psi}^t_k \in \mathbb{R}^{|\mathcal{L}|}, \\ \forall t \in \mathcal{T}(k) } } \quad & \sum_{t \in \mathcal{T}(k)} \gamma \log \left( \sum_{l_k \in \mathcal{L}} \exp \left( \frac{\bm{\nu}^t_k(l_k)}{\gamma} \right) \right) \\
\text{s.t.} \quad & \sum_{t \in \mathcal{T}(k)} \bm{\psi}^t_k(l_k)= \bm{\psi}_k(l_k), \quad \forall l_k \in \mathcal{L} \nonumber
\end{align}

we can get rid of $\gamma \log (\cdot)$ since it's a monotonically increasing function (for positive $\gamma$), resulting in:

\begin{align}
\label{eqn:LP_dual_coordinate_descent_smooth}
\min_{\substack{\bm{\psi}^t_k \in \mathbb{R}^{|\mathcal{L}|}, \\ \forall t \in \mathcal{T}(k) } } \quad & \prod_{t \in \mathcal{T}(k)} \left( \sum_{l_k \in \mathcal{L}} \exp \left( \frac{\bm{\nu}^t_k(l_k)}{\gamma} \right) \right) \\
\text{s.t.} \quad & \sum_{t \in \mathcal{T}(k)} \bm{\psi}^t_k(l_k)= \bm{\psi}_k(l_k), \quad \forall l_k \in \mathcal{L} \nonumber
\end{align}

Let us define $\bm{\lambda}^t_k$ as the change of $\bm{\psi}^t_k$ after some update, and optimize over $\bm{\lambda}^t_k$ while fixing $\bm{\psi}^t_k$:

\begin{align}
\label{eqn:LP_dual_coordinate_descent_smooth_lambda}
\min_{\substack{\bm{\lambda}^t_k \in \mathbb{R}^{|\mathcal{L}|}, \\ \forall t \in \mathcal{T}(k)}} \quad & \prod_{t \in \mathcal{T}(k)} \left( \sum_{l_k \in \mathcal{L}} \exp \left( \frac{\bm{\nu}^t_k(l_k) + \bm{\lambda}^t_k(l_k)}{\gamma} \right) \right) \\
\text{s.t.} \quad & \sum_{t \in \mathcal{T}(k)} \bm{\lambda}^t_k(l_k)= 0, \quad \forall l_k \in \mathcal{L} \nonumber
\end{align}

In terms of $\bm{\lambda}^t_k (l_k)$, update rule Eq.~\eqref{eqn:update_rule_single_smooth} is equivalent to the solution:

\begin{align}
\label{eqn:solution_to_lambda}
\bm{\lambda}^t_k (l_k) = -\bm{\nu}^t_k (l_k) + \frac{1}{|\mathcal{T}(k)|} \sum_{\bar{t} \in \mathcal{T}(k)} \bm{\nu}^{\bar{t}}_k (l_k), \forall t \in \mathcal{T}(k)
\end{align}

This solution also makes $\bm{\nu}^t_k(l_k) + \bm{\lambda}^t_k(l_k) = \bm{\nu}^{\bar{t}}_k(l_k) + \bm{\lambda}^{\bar{t}}_k(l_k), \forall l_k \in \mathcal{L}, \forall (t, \bar{t}) \in \mathcal{T}(k) \times \mathcal{T}(k)$.  Thus if applying solution Eq.~\eqref{eqn:solution_to_lambda}, the equality condition of AM-GM inequality is met, resulting in:

\begin{align}
& \prod_{t \in \mathcal{T}(k)} \left( \sum_{l_k \in \mathcal{L}} \exp \left( \frac{\bm{\nu}^t_k(l_k) + \bm{\lambda}^t_k(l_k)}{\gamma} \right) \right) \nonumber \\
= & \left( \frac{1}{|\mathcal{T}(k)|} \sum_{t \in \mathcal{T}(k)} \sum_{l_k \in \mathcal{L}} \exp \left( \frac{\bm{\nu}^t_k(l_k) + \bm{\lambda}^t_k(l_k)}{\gamma} \right)  \right)^{|\mathcal{T}(k)|} \nonumber
\end{align}

When doing minimization, we can drop the denominator and exponent on the right-hand side, thus the objective becomes:

\begin{align}
\label{eqn:LP_dual_coordinate_descent_smooth_lambda_sum}
\min_{\substack{\bm{\lambda}^t_k \in \mathbb{R}^{|\mathcal{L}|}, \\ \forall t \in \mathcal{T}(k)}} \quad & \sum_{t \in \mathcal{T}(k)} \left( \sum_{l_k \in \mathcal{L}} \exp \left( \frac{\bm{\nu}^t_k(l_k) + \bm{\lambda}^t_k(l_k)}{\gamma} \right) \right) \\
\text{s.t.} \quad & \sum_{t \in \mathcal{T}(k)} \bm{\lambda}^t_k(l_k)= 0, \quad \forall l_k \in \mathcal{L} \nonumber
\end{align}

Since there is no constraint over different pairs in label space (\ie\ $\mathcal{L} \times \mathcal{L}$), it is equivalent to optimize for each unique $l_k \in \mathcal{L}$ independently, each optimization problem is:

\begin{align}
\label{eqn:LP_dual_coordinate_descent_smooth_lambda_sum_single}
\min_{\substack{\bm{\lambda}^t_k (l_k) \in \mathbb{R}, \\ \forall t \in \mathcal{T}(k)}} \quad & \sum_{t \in \mathcal{T}(k)} \exp \left( \frac{\bm{\nu}^t_k(l_k) + \bm{\lambda}^t_k(l_k)}{\gamma} \right) \\
\text{s.t.} \quad & \sum_{t \in \mathcal{T}(k)} \bm{\lambda}^t_k(l_k)= 0 \nonumber
\end{align}

Converting Eq.~\eqref{eqn:LP_dual_coordinate_descent_smooth_lambda_sum_single} to dual form:

\begin{align}
\begin{split}
\label{eqn:LP_dual_coordinate_descent_smooth_lambda_dual}
\min_{\substack{\bm{\lambda}^t_k (l_k) \in \mathbb{R}, \\ \forall t \in \mathcal{T}(k)}} \max_{\alpha \in \mathbb{R}} \quad & \sum_{t \in \mathcal{T}(k)} \exp \left( \frac{\bm{\nu}^t_k(l_k) + \bm{\lambda}^t_k(l_k)}{\gamma} \right) \\
& - \alpha \sum_{t \in \mathcal{T}(k)} \bm{\lambda}^t_k(l_k)
\end{split}
\end{align}

Note that we can put $min$ on the outside because an LP always satisfies KKT condition.  Setting derivative of Eq.~\eqref{eqn:LP_dual_coordinate_descent_smooth_lambda_dual} w.r.t. $\bm{\lambda}^t_k (l_k)$ to zero for any sub-problem $t$, we have:

\begin{align}
\alpha = \frac{1}{\gamma} \exp \left( \frac{\bm{\nu}^t_k(l_k) + \bm{\lambda}^t_k(l_k)}{\gamma} \right), \forall t \in \mathcal{T}(k)
\end{align}

Thus one must satisfy $\bm{\nu}^t_k(l_k) + \bm{\lambda}^t_k(l_k) = \bm{\nu}^{\bar{t}}_k(l_k) + \bm{\lambda}^{\bar{t}}_k(l_k), \forall (t, \bar{t}) \in \mathcal{T}(k) \times \mathcal{T}(k)$ which is exactly what Eq.~\eqref{eqn:solution_to_lambda} achieves.  Also remember that $\sum_{t \in \mathcal{T}(k)} \bm{\lambda}^t_k(l_k) = 0$ is satisfied when applying Eq.~\eqref{eqn:solution_to_lambda}, it follows that Eq.~\eqref{eqn:LP_dual_coordinate_descent_smooth_lambda_dual} = Eq.~\eqref{eqn:LP_dual_coordinate_descent_smooth_lambda_sum_single}, \ie\ duality gap of the LP is 0 and the update rule Eq.~\eqref{eqn:update_rule_single_smooth} is an optimal update for Eq.~\eqref{eqn:MAP_dual_decomposition_smoothed_reparam}

% In terms of $\bm{\lambda}^t_k (l_k)$, update rule Eq.~\eqref{eqn:update_rule_single_smooth} is equivalent to setting $\bm{\lambda}^t_k (l_k) = -\bm{\nu}^t_k (l_k) + \frac{1}{|\mathcal{T}(k)|} \sum_{\bar{t} \in \mathcal{T}(k)} \bm{\nu}^{\bar{t}}_k (l_k)$.  The update terms satisfy $\sum_{t \in \mathcal{T}(k)} \bm{\lambda}^t_k(l_k) = 0$

\end{proof}

% In practice we could update \textit{all} $\bm{\psi}$'s at once, which, although it may result in slower convergence, may permit efficient parallel algorithms for modern GPU architectures.  The hope is that neural-networks can minimize the empirical risk with respect to our coordinate-descent algorithm, since output from any coordinate-descent step is (sub-)differentiable. 
We now prove the monotone update step for simultaneously updating all locations for dual-decomposition with smoothed-max:

\begin{theorem}
\label{theo_1_smooth}
The following update to $\bm{\psi}^t_i, \forall i \in V, t \in \mathcal{T}(i)$ will not increase the objective Eq.~\eqref{eqn:MAP_dual_decomposition_smoothed_reparam}:

\begin{align}
\label{eqn:update_rule_all_smooth}
 \bm{\psi}_i^t(l_i) \leftarrow \bm{\psi}_i^t(l_i) - \frac{1}{|V^0|} \left(\bm{\nu}_i^t(l_i) - \frac{1}{|\mathcal{T}(i)|} \sum_{\bar{t} \in \mathcal{T}(i)} \bm{\nu}_i^{\bar{t}}(l_i) \right), \nonumber \\
 \forall i \in V, t \in \mathcal{T} (i), l_i \in \mathcal{L}
\end{align}

where $|V^0| = \max_{t \in \mathcal{T}} |V^t|$ and $|V^t|$ denotes the number of vertices sub-problem $t$.
\end{theorem}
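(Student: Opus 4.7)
The plan is to mimic the structure of the non-smoothed Theorem~\ref{theo_1} proof, replacing the max-marginals $\bm{\mu}^t$ with smoothed max-marginals $\bm{\nu}^t$ and substituting Lemma~\ref{lemm_1_smooth} in place of Lemma~\ref{lemm_1}. The change in the objective from applying the full update of Eq.~\eqref{eqn:update_rule_all_smooth} can be written as $\sum_{t \in \mathcal{T}} (\hat{\bm{\nu}}^t - \bm{\nu}^t)$, where $\hat{\bm{\nu}}^t$ denotes the smoothed-max energy after the simultaneous update, so the goal is to show this quantity is non-positive.

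First I would establish that $\bm{\nu}^t$ is a convex function of $\bm{\psi}^t$. By Eq.~\eqref{eqn:smoothed_max_energy_equality}, $\bm{\nu}^t$ equals $\gamma \log \sum_{\bm{x}^t \in \mathcal{X}^{\mathcal{G}}} \exp((\langle \bm{x}^t, \bm{\psi}^t \rangle + \langle \bm{x}_{ij}^t, \bm{\phi}_{ij}\rangle)/\gamma)$, which is a log-sum-exp of affine functions of $\bm{\psi}^t$ and is therefore convex. Next, I would express the full simultaneous update as a convex combination of $|V^t|+1$ feasible points: the unchanged $\bm{\psi}^t$ with weight $(|V^0|-|V^t|)/|V^0|$, and for each $i \in V^t$, the single-location update $\bar{\bm{\psi}}^t$ at $i$ (from Eq.~\eqref{eqn:update_rule_single_smooth}) with weight $1/|V^0|$. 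A brief verification that the per-coordinate weighted sum produces exactly the full update of Eq.~\eqref{eqn:update_rule_all_smooth} confirms this decomposition.

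Applying Jensen's inequality to the convex $\bm{\nu}^t$ then yields
\begin{align*}
\hat{\bm{\nu}}^t \leq \frac{|V^0|-|V^t|}{|V^0|}\bm{\nu}^t + \sum_{i \in V^t} \frac{1}{|V^0|} \bar{\bm{\nu}}^t_i,
\end{align*}
where $\bar{\bm{\nu}}^t_i$ is the smoothed-max energy after the single-location update at $i$. Substituting and rearranging via a swap of summation order gives
\begin{align*}
\sum_{t \in \mathcal{T}}(\hat{\bm{\nu}}^t - \bm{\nu}^t) \leq \frac{1}{|V^0|}\sum_{i \in V}\sum_{t \in \mathcal{T}(i)}\bigl(\bar{\bm{\nu}}^t_i - \bm{\nu}^t\bigr).
\end{align*}
Finally, I would invoke Lemma~\ref{lemm_1_smooth}: for each fixed $i$, the single-location update minimizes $\sum_{t \in \mathcal{T}(i)} \bm{\nu}^t$ subject to the reparameterization constraint, and since no-update is feasible, the minimum value $\sum_{t \in \mathcal{T}(i)} \bar{\bm{\nu}}^t_i$ is at most the current $\sum_{t \in \mathcal{T}(i)} \bm{\nu}^t$. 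Therefore each inner sum is non-positive, completing the proof.

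The main obstacle I anticipate is verifying the Jensen decomposition carefully: one must confirm that the convex combination of the "no update" point and the $|V^t|$ single-location updated points is well-defined as a point in the parameter space (the weights sum to one and all points satisfy the reparameterization constraint $\sum_{t \in \mathcal{T}(i)}\bm{\psi}^t_i = \bm{\psi}_i$), and that evaluating $\bm{\nu}^t$ at this convex combination returns exactly $\hat{\bm{\nu}}^t$. Beyond that bookkeeping, the rest is a direct translation of the non-smoothed argument, with the convexity of log-sum-exp playing the role that convexity of a max of affine functions played in Theorem~\ref{theo_1}.
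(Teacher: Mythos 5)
Your proposal is correct and follows essentially the same route as the paper's proof: convexity of the smoothed-max energy $\bm{\nu}^t$ in $\bm{\psi}^t$ (via log-sum-exp of affine functions), Jensen's inequality applied to the simultaneous update viewed as a convex combination of the no-update point and the single-location updates with weights $\frac{|V^0|-|V^t|}{|V^0|}$ and $\frac{1}{|V^0|}$, a swap of summation order, and Lemma~\ref{lemm_1_smooth} to make each per-location term $\sum_{t \in \mathcal{T}(i)}(\bar{\bm{\nu}}^t_i - \bm{\nu}^t)$ non-positive. Your explicit verification of the convex-combination decomposition and the feasibility of its constituent points is exactly the bookkeeping the paper leaves implicit, so no gap remains.
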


\begin{proof}
We denote $\hat{\bm{\nu}}^t$ as the smoothed-max-energy of sub-problem $t$ after we apply update Eq.~\eqref{eqn:update_rule_all_smooth} to $\{ \bm{\psi}^t \}$ and $\bar{\bm{\nu}}^t_i$ as smoothed-max-energy of sub-problem $t$ after we apply update Eq.~\eqref{eqn:update_rule_single_smooth} for location $i$.  Consider changes in objective from updating $\{ \bm{\psi}^t \}$ according to Eq.~\eqref{eqn:update_rule_all_smooth}:

\begin{align}
\label{eqn:change_in_smooth_dual_obj}
&- \sum_{t \in \mathcal{T}} \bm{\nu}^t + \sum_{t \in \mathcal{T}} \hat{\bm{\nu}}^t
\end{align}

We can deduct that $\bm{\nu}^t$ as defined by Eq.~\eqref{eqn:smoothed_max_energy_equality} (and consequently $\hat{\bm{\nu}}^t$) is a convex function of $\bm{\psi}^t$ by the following steps:
\begin{enumerate}
    \item $\sum_{i \in V} \bm{x}_i^t \cdot \bm{\psi}_i^t + \sum_{ij \in E} \bm{x}_{ij}^t \cdot \bm{\phi}_{ij}$ is a non-decreasing convex function of $\bm{\psi}^t$ for any $\bm{x}^t \in \mathcal{X}^{\mathcal{G}}$.
    \item $logsumexp$ function is non-decreasing convex function, and a composition of two non-decreasing convex functions is convex, thus $\bm{\nu}^t$ (and $\hat{\bm{\nu}}^t$) is a convex function of $\bm{\psi}^t$.
\end{enumerate}

We can then apply Jensen's Inequality to the second term of Eq.~\eqref{eqn:change_in_smooth_dual_obj}: 

\begin{align}
Eq.~\eqref{eqn:change_in_smooth_dual_obj} \leq &- \sum_{t \in \mathcal{T}} \bm{\nu}^t + \sum_{t \in \mathcal{T}} \left( \frac{|V^0| - |V^t|}{|V^0|} \bm{\nu}^t + \sum_{i \in V^t} \frac{1}{|V^0|} \bar{\bm{\nu}}^t_i \right) \nonumber \\
= & \frac{1}{|V^0|} \sum_{t \in \mathcal{T}} \sum_{i \in V^t} \left( \bar{\bm{\nu}}^t_i - \bm{\nu}^t \right) \nonumber \\
= & \frac{1}{|V^0|} \sum_{i \in V^t} \sum_{t \in \mathcal{T}(i)} \left( \bar{\bm{\nu}}^t_i - \bm{\nu}^t \right) \nonumber
\end{align}

Observe that $\sum_{t \in \mathcal{T}(i)} \bm{\nu}^t_i$ corresponds to objective Eq.~\eqref{eqn:LP_dual_coordinate_descent_smooth_log} before applying update Eq.~\eqref{eqn:update_rule_single_smooth} and $\sum_{t \in \mathcal{T}(i)} \bar{\bm{\nu}}^t$ corresponds the same objective after applying update Eq.~\eqref{eqn:update_rule_single_smooth}.  From Lemma~\ref{lemm_1_smooth} we know that the update Eq.~\eqref{eqn:update_rule_single_smooth} will not increase the objective Eq.~\eqref{eqn:LP_dual_coordinate_descent_smooth_log}, it follows that $\sum_{t \in \mathcal{T}(i)} \left( \bar{\bm{\nu}}^t_i - \bm{\nu}^t \right) \leq 0$, which results in Eq.~\eqref{eqn:change_in_smooth_dual_obj} $\leq$ 0 and thus the update Eq.~\eqref{eqn:update_rule_all_smooth} constitutes a non-increasing step to objective Eq.~\eqref{eqn:MAP_dual_decomposition_smoothed_reparam}.

\end{proof}

\section{Implementation Details for Parallel Dynamic Programming}
\label{sec:app_parallel_dp}

We define dynamic programming (DP) for computing max-marginal or smoothed-max-marginal term, $\bm{\mu}^t_i (l_i)$ or $\bm{\nu}^t_i (l_i)$, as follows:

\begin{align}
    \bm{\mu}_i^t(l_i) &= \bm{\psi}_i^t(l_i) + \sum_{j \in \mathcal{C}(i)} \max_{l_j \in \mathcal{L}} (\bm{\phi}_{ij}(l_i, l_j) + \bm{\mu}_j^t(l_j)) \label{eqn:DP_max_appendix}
\end{align}
\begin{align}
\begin{split}
    \bm{\nu}^t_i (l_i) =& \ \bm{\psi}^t_i (l_i) + \\
    &\sum_{j \in \mathcal{C}(i)} \gamma \log \left( \sum_{l_j \in \mathcal{L}} \exp \left( \frac{\bm{\nu}^t_j (l_j) + \bm{\phi}^t_{ij} (l_i, l_j)}{\gamma} \right) \right)
\end{split}
\label{eqn:DP_smoothed_max}
\end{align}

It is obvious that for any leaf nodes $k$ we have $\mathcal{C}(k) = \emptyset$.  In the following we derive algorithms for computing smoothed-max-marginals and their gradients, while max-marginals and their gradients can be derived in the same fashion.

% Since we are dealing with chain-structured sub-problems, we can conveniently define partial max-marginals for DP from root to leaf and from leaf to root as:
% 
% \begin{align}
%     \bm{\mu}_{i,f}^t(l_i) &= \bm{\psi}_i^t(l_i) + \sum_{j \in \mathcal{C}_f(i)} \max_{l_j \in \mathcal{L}} (\bm{\phi}_{ij}(l_i, l_j) + \bm{\mu}_{j,f}^t(l_j)) \label{eqn:DP_r2l} \\
%     \bm{\mu}_{i,b}^t(l_i) &= \bm{\psi}_i^t(l_i) + \sum_{j \in \mathcal{C}_b(i)} \max_{l_j \in \mathcal{L}} (\bm{\phi}_{ij}(l_i, l_j) + \bm{\mu}_{j,b}^t(l_j)) \label{eqn:DP_l2r}
% \end{align}
% 
% where $\mathcal{C}_f(i)$ and $\mathcal{C}_b(i)$ denotes the set of children of node $i$ in root-leaf direction and leaf-root direction, respecitvely.  We can then compute $\bm{\mu}^t_i (l_i)$ as:
% 
% \begin{align}
% \label{eqn:combine_partial_DPs}
%     \bm{\mu}_i^t(l_i) = \bm{\mu}_{i,f}^t(l_i) + \bm{\mu}_{i,b}^t(l_i) - \bm{\psi}^t_i (l_i)
% \end{align}

Without loss of generality, we assume a $M \times M$ pixel-grid with stride 1 and stride 2 horizontal/vertical edges (as illustrated in Fig.~\ref{fig:graph_decomp_illust}) and want to compute smoothed-max-marginals for every label at every location.  We have $M$ horizontal chains and $M$ vertical chains of length $M$, respectively, and also:
\begin{enumerate}
    \item If $M$ is even: $2M$ horizontal chains and $2M$ vertical chains of length $\frac{M}{2}$.
    \item If $M$ is odd: $M$ horizontal chains and $M$ vertical chains of length $\left \lceil \frac{M}{2} \right \rceil$, $M$ horizontal chains and $M$ vertical chains of length $\left \lfloor \frac{M}{2} \right \rfloor$.
\end{enumerate}

It is straightforward that computing smoothed-max-marginals for each chain/sub-problem can be done in parallel.  A less straightforward point for parallelism is that for each sub-problem $t$ at a location $i$, we can compute $\bm{\nu}^t_i(l_i)$ for each $l_i \in \mathcal{L}$ in parallel, this requires the threads inside sub-problem $t$ to be synchronized at each location before proceeding to the next location.  To compute smoothed-max-marginals for all locations and all labels we need to run two passes of DP over each sub-problem.  The total time complexity is thus $\mathcal{O}(M |\mathcal{L}|)$, while the space complexity is $\mathcal{O}(M |\mathcal{L}|^2)$ ($|\mathcal{L}|^2$ for storing soft-max probabilities for later use in backward-pass, if no need to compute gradients then the space complexity becomes $\mathcal{O}(M |\mathcal{L}|)$).

For backward-pass, remember the loss function is computed with smoothed-max-marginals of \textit{all} locations and labels, thus we need to differentiate through \textit{all} locations and labels.  At first glance, this will result in an algorithm of $\mathcal{O}(M^2 |\mathcal{L}|)$ time complexity (with parallelization over locations and labels at each location) as gradient at one location is affected by gradient from all locations.  A better way of doing backward-pass would be starting from root/leaf location, passing gradients to the next location while also adding in the gradients of the next location from loss-layer, and recurse till the leaf/root.  We can do two passes of this DP to obtain the final gradients for every location, this results in a $\mathcal{O}(M |\mathcal{L}|)$ time and $\mathcal{O}(M |\mathcal{L}|^2)$ space algorithm.

Formally, we call DP from root to leaf as forward-DP and denote it with a subscript $f$, while DP from leaf to root is backward-DP and is denoted with subscript $b$.  In this context, $\mathcal{C}_f(i)$ indicates the set of previous node(s) of node $i$ in root-leaf direction, while $\mathcal{C}_b(i)$ indicates the set of previous node(s) of node $i$ in leaf-root direction.  The forward and backward passes for smoothed-max-marginals and max-marginals are described as Alg.~\ref{alg:DDP_forward_backward} and Alg.~\ref{alg:DP_forward_backward}, respectively.

\begin{algorithm*}
\caption{Forward and Backward passes for computing smoothed-max-marginals and their gradients on sub-problem $t$}
\begin{algorithmic}[1] 
  \Statex $\triangleright$ Forward-pass
  \Statex \textbf{Input}: potentials $\bm{\phi}^t, \bm{\psi}^t$ 
  \Statex \textbf{Output}: smoothed-max-marginals $\bm{\nu}^t_i (l_i), \forall i \in V^t, \forall l_i \in \mathcal{L}$ 
  \Statex \textbf{Initialize}: $\bm{\nu}^t_{f,i} (l_i) = \bm{\nu}^t_{b,i} (l_i) = 0, \forall i \in V^t, \forall l_i \in \mathcal{L}$ 
  \For{$i \in V^t$ from root to leaf} \Comment{\textbf{sequential}}
    \For{$l_i \in \mathcal{L}$} \Comment{\textbf{parallel}}
      \State $\bm{\nu}^t_{i,f} (l_i) = \bm{\psi}^t_i (l_i) + \sum_{j \in \mathcal{C}_f(i)} \gamma \log \left( \sum_{l_j \in \mathcal{L}} \exp \left( \frac{\bm{\nu}^t_{j,f} (l_j) + \bm{\phi}^t_{ij} (l_i, l_j)}{\gamma} \right) \right)$
      \For{$j \in \mathcal{C}_f(i)$} \Comment{\textbf{parallel}}
        \For{$l_j \in \mathcal{L}$} \Comment{\textbf{parallel}}
          \State $\bm{w}_{ji, f} (l_j, l_i) = \frac{\exp ((\bm{\nu}^t_{j,f} (l_j) + \bm{\phi}^t_{ij} (l_i, l_j)) / \gamma )}{\sum_{l \in \mathcal{L}} \exp ((\bm{\nu}^t_{j,f} (l) + \bm{\phi}^t_{ij} (l_i, l)) / \gamma )}$ \Comment{\textbf{store weight}}
        \EndFor
      \EndFor
    \EndFor
  \EndFor
  \For{$i \in V^t$ from leaf to root} \Comment{\textbf{sequential}}
    \For{$l_i \in \mathcal{L}$} \Comment{\textbf{parallel}}
      \State $\bm{\nu}^t_{i,b} (l_i) = \bm{\psi}^t_i (l_i) + \sum_{j \in \mathcal{C}_b(i)} \gamma \log \left( \sum_{l_j \in \mathcal{L}} \exp \left( \frac{\bm{\nu}^t_{j,b} (l_j) + \bm{\phi}^t_{ij} (l_i, l_j)}{\gamma} \right) \right)$
      \For{$j \in \mathcal{C}_b(i)$} \Comment{\textbf{parallel}}
        \For{$l_j \in \mathcal{L}$} \Comment{\textbf{parallel}}
          \State $\bm{w}_{ij, b} (l_i, l_j) = \frac{\exp ((\bm{\nu}^t_{j,b} (l_j) + \bm{\phi}^t_{ij} (l_i, l_j)) / \gamma )}{\sum_{l \in \mathcal{L}} \exp ((\bm{\nu}^t_{j,b} (l) + \bm{\phi}^t_{ij} (l_i, l)) / \gamma )}$ \Comment{\textbf{store weight}}
        \EndFor
      \EndFor
      \State $\bm{\nu}_i^t(l_i) = \bm{\nu}_{i,f}^t(l_i) + \bm{\nu}_{i,b}^t(l_i) - \bm{\psi}^t_i (l_i)$
    \EndFor
  \EndFor
  \Statex $\triangleright$ Backward-pass
  \Statex \textbf{Input}: gradients w.r.t. max-marginals, $\nabla \bm{\nu}^t_i (l_i), \forall i \in V^t, \forall l_i \in \mathcal{L}$, and the $\bm{w}_f, \bm{w}_b$ terms from forward-pass 
  \Statex \textbf{Output}: gradients w.r.t. potentials $\nabla \bm{\phi}^t, \nabla \bm{\psi}^t$ 
  \Statex \textbf{Initialize}: $\nabla \bm{\phi}^t = \bm{0}, \nabla \bm{\psi}^t_{f,i} (l_i) = \nabla \bm{\psi}^t_{b,i} (l_i) = \nabla \bm{\nu}^t_i (l_i), \forall i \in V^t, \forall l_i \in \mathcal{L}$ 
  \For{$i \in V^t$ from leaf to root} \Comment{\textbf{sequential}}
    \For{$l_i \in \mathcal{L}$} \Comment{\textbf{parallel}}
      \For{$j \in \mathcal{C}_b(i)$} \Comment{\textbf{sequential}}
        \For{$l_j \in \mathcal{L}$} \Comment{\textbf{sequential}}
          \State $\nabla \bm{\psi}_{i,b}^t(l_i) = \nabla \bm{\psi}_{i,b}^t(l_i) + \bm{w}_{ij, f} (l_i, l_j) \nabla \bm{\psi}_{j,b}^t(l_j)$ \Comment{\textbf{back-track}}
          \State $\nabla \bm{\phi}_{ij}^t (l_i, l_j) = \nabla \bm{\phi}_{ij}^t (l_i, l_j) + \bm{w}_{ij, f} (l_i, l_j) \nabla \bm{\psi}_{j,b}^t(l_j)$
        \EndFor
      \EndFor
    \EndFor
  \EndFor
  \For{$i \in V^t$ from root to leaf} \Comment{\textbf{sequential}}
    \For{$l_i \in \mathcal{L}$} \Comment{\textbf{parallel}}
      \For{$j \in \mathcal{C}_f(i)$} \Comment{\textbf{sequential}}
        \For{$l_j \in \mathcal{L}$} \Comment{\textbf{sequential}}
          \State $\nabla \bm{\psi}_{i,f}^t(l_i) = \nabla \bm{\psi}_{i,f}^t(l_i) + \bm{w}_{ji, b} (l_j, l_i) \nabla \bm{\psi}_{j,f}^t(l_j)$ \Comment{\textbf{back-track}}
          \State $\nabla \bm{\phi}_{ij}^t (l_i, l_j) = \nabla \bm{\phi}_{ij}^t (l_i, l_j) + \bm{w}_{ji, b} (l_j, l_i) \nabla \bm{\psi}_{j,f}^t(l_j)$
        \EndFor
      \EndFor
      \State $\nabla \bm{\psi}_i^t(l_i) = \nabla \bm{\psi}_{i,f}^t(l_i) + \nabla \bm{\psi}_{i,b}^t(l_i) - \nabla \bm{\nu}_i^t(l_i)$
    \EndFor
  \EndFor
\end{algorithmic}
\label{alg:DDP_forward_backward}
\end{algorithm*}

\begin{algorithm*}
\caption{Forward and Backward passes for computing max-marginals and their gradients on sub-problem $t$}
\begin{algorithmic}[1] 
  \Statex $\triangleright$ Forward-pass
  \Statex \textbf{Input}: potentials $\bm{\phi}^t, \bm{\psi}^t$ 
  \Statex \textbf{Output}: max-marginals $\bm{\mu}^t_i (l_i), \forall i \in V^t, \forall l_i \in \mathcal{L}$ 
  \Statex \textbf{Initialize}: $\bm{\mu}^t_{f,i} (l_i) = \bm{\mu}^t_{b,i} (l_i) = 0, \forall i \in V^t, \forall l_i \in \mathcal{L}$ 
  \For{$i \in V^t$ from root to leaf} \Comment{\textbf{sequential}}
    \For{$l_i \in \mathcal{L}$} \Comment{\textbf{parallel}}
      \State $\bm{\mu}_{i,f}^t(l_i) = \bm{\psi}_i^t(l_i) + \sum_{j \in \mathcal{C}_f(i)} \max_{l_j \in \mathcal{L}} (\bm{\phi}_{ij}(l_i, l_j) + \bm{\mu}_{j,f}^t(l_j))$
      \For{$j \in \mathcal{C}_f(i)$} \Comment{\textbf{parallel}}
        \State $\bm{r}_{ji,f} (l_i) = \argmax_{l_j \in \mathcal{L}} \bm{\phi}_{ij}(l_i, l_j) + \bm{\mu}_{j,f}^t(l_j) $ \Comment{\textbf{store index}}
      \EndFor
    \EndFor
  \EndFor
  \For{$i \in V^t$ from leaf to root} \Comment{\textbf{sequential}}
    \For{$l_i \in \mathcal{L}$} \Comment{\textbf{parallel}}
      \State $\bm{\mu}_{i,b}^t(l_i) = \bm{\psi}_i^t(l_i) + \sum_{j \in \mathcal{C}_b(i)} \max_{l_j \in \mathcal{L}} (\bm{\phi}_{ij}(l_i, l_j) + \bm{\mu}_{j,b}^t(l_j))$
      \For{$j \in \mathcal{C}_b(i)$} \Comment{\textbf{parallel}}
        \State $\bm{r}_{ij,b} (l_i) = \argmax_{l_j \in \mathcal{L}} \bm{\phi}_{ij}(l_i, l_j) + \bm{\mu}_{j,b}^t(l_j) $ \Comment{\textbf{store index}}
      \EndFor
    \EndFor
  \EndFor
  \Statex $\triangleright$ Backward-pass
  \Statex \textbf{Input}: gradients w.r.t. max-marginals, $\nabla \bm{\mu}^t_i (l_i), \forall i \in V^t, \forall l_i \in \mathcal{L}$ 
  \Statex \textbf{Output}: gradients w.r.t. potentials $\nabla \bm{\phi}^t, \nabla \bm{\psi}^t$ 
  \Statex \textbf{Initialize}: $\nabla \bm{\phi}^t = \bm{0}, \nabla \bm{\psi}^t_{f,i} (l_i) = \nabla \bm{\psi}^t_{b,i} (l_i) = \nabla \bm{\mu}^t_i (l_i), \forall i \in V^t, \forall l_i \in \mathcal{L}$ 
  \For{$i \in V^t$ from leaf to root} \Comment{\textbf{sequential}}
    \For{$l_i \in \mathcal{L}$} \Comment{\textbf{parallel}}
      \For{$j \in \mathcal{C}_b(i)$} \Comment{\textbf{sequential}}
        \For{$l_j \in \mathcal{L}$} \Comment{\textbf{sequential}}
            \If{$l_i = \bm{r}_{ij,f} (l_j)$}
            \State $\nabla \bm{\psi}_{i,b}^t(l_i) = \nabla \bm{\psi}_{i,b}^t(l_i) + \nabla \bm{\psi}_{j,b}^t(l_j)$ \Comment{\textbf{back-track}}
            \State $\nabla \bm{\phi}_{ij}^t (l_i, l_j) = \nabla \bm{\phi}_{ij}^t (l_i, l_j) + \nabla \bm{\psi}_{j,b}^t(l_j)$
          \Else
            \State do nothing
          \EndIf
        \EndFor
      \EndFor
    \EndFor
  \EndFor
  \For{$i \in V^t$ from root to leaf} \Comment{\textbf{sequential}}
    \For{$l_i \in \mathcal{L}$} \Comment{\textbf{parallel}}
      \For{$j \in \mathcal{C}_f(i)$} \Comment{\textbf{sequential}}
        \For{$l_j \in \mathcal{L}$} \Comment{\textbf{sequential}}
          \If{$l_i = \bm{r}_{ji,b} (l_j)$}
            \State $\nabla \bm{\psi}_{i,f}^t(l_i) = \nabla \bm{\psi}_{i,f}^t(l_i) + \nabla \bm{\psi}_{j,f}^t(l_j)$ \Comment{\textbf{back-track}}
            \State $\nabla \bm{\phi}_{ij}^t (l_i, l_j) = \nabla \bm{\phi}_{ij}^t (l_i, l_j) + \nabla \bm{\psi}_{j,f}^t(l_j)$
          \Else
            \State do nothing
          \EndIf
        \EndFor
      \EndFor
      \State $\nabla \bm{\psi}_i^t(l_i) = \nabla \bm{\psi}_{i,f}^t(l_i) + \nabla \bm{\psi}_{i,b}^t(l_i) - \nabla \bm{\mu}_i^t(l_i)$
    \EndFor
  \EndFor
\end{algorithmic}
\label{alg:DP_forward_backward}
\end{algorithm*}

% {\small
% \bibliographystyle{ieee_fullname}
% \bibliography{DDD}
% }

\end{document}